\documentclass[11pt]{article}

\usepackage[
  top=2.54cm,
  bottom=2.54cm,
  left=2.54cm,
  right=2.54cm
]{geometry}

\usepackage{amsmath}
\usepackage{amssymb}

\usepackage{graphicx}
\usepackage{subcaption}
\usepackage{float}      
\usepackage{placeins}   

\usepackage{booktabs}
\usepackage{multirow}
\usepackage{array}

\usepackage[dvipsnames]{xcolor}

\definecolor{pwmBlue}{HTML}{2E5FA1}
\definecolor{pwmOrange}{HTML}{E8712B}
\definecolor{pwmGreen}{HTML}{3D9E3F}
\definecolor{pwmRed}{HTML}{C93C3C}
\definecolor{pwmPurple}{HTML}{7B4EA3}
\definecolor{pwmGray}{HTML}{6B6B6B}

\definecolor{gateOne}{HTML}{4A90D9}    
\definecolor{gateTwo}{HTML}{E8A838}    
\definecolor{gateThree}{HTML}{D94A4A}  

\definecolor{scenarioI}{HTML}{2E7D32}    
\definecolor{scenarioII}{HTML}{C62828}   
\definecolor{scenarioIII}{HTML}{1565C0}  
\definecolor{scenarioIV}{HTML}{6A1B9A}   

\usepackage[
  colorlinks=true,
  linkcolor=pwmBlue,
  citecolor=pwmBlue,
  urlcolor=pwmBlue
]{hyperref}
\usepackage[capitalise,noabbrev]{cleveref}

\usepackage[numbers,sort&compress,super]{natbib}

\usepackage{amsthm}
\newtheorem{theorem}{Theorem}
\newtheorem{definition}{Definition}

\newcommand{\ctier}{\mathcal{C}_{\mathrm{img}}}
\newcommand{\Blib}{\mathcal{B}}
\newcommand{\etier}{e_{\mathrm{img}}}
\newcommand{\Nmax}{N_{\max}}
\newcommand{\Dmax}{D_{\max}}
\newcommand{\Hdag}{H_{G}}

\newcommand{\pwm}{\textsc{pwm}}
\newcommand{\triad}{\textsc{Triad Decomposition}}
\newcommand{\opg}{\textsc{OperatorGraph}}

\newcommand{\eg}{\textit{e.g.}}
\newcommand{\etc}{\textit{etc.}}

\newcommand{\Hnom}{H_{\mathrm{nom}}}
\newcommand{\Htrue}{H_{\mathrm{true}}}
\newcommand{\xgt}{\mathbf{x}_{\mathrm{gt}}}
\newcommand{\yobs}{\mathbf{y}_{\mathrm{obs}}}
\newcommand{\psnr}{\mathrm{PSNR}}

\newcommand{\recoveryratio}{\rho}
\newcommand{\mismatchparam}{\boldsymbol{\theta}}
\newcommand{\triadreport}{\textsc{Triad\-Report}}

\newcommand{\gateone}{\textbf{Gate\,1}}
\newcommand{\gatetwo}{\textbf{Gate\,2}}
\newcommand{\gatethree}{\textbf{Gate\,3}}

\begin{document}

\title{Eleven Primitives and Three Gates:\\
The Universal Structure of Computational Imaging}

\author{
Chengshuai Yang$^{1,*}$
\and
Xin Yuan$^{2}$
}

\date{%
$^1$NextGen PlatformAI C Corp, USA.\quad
$^2$School of Engineering, Westlake University, Hangzhou, China.\\[2pt]
$^*$Correspondence: \url{integrityyang@gmail.com}%
}

\maketitle

\begin{abstract}
Computational imaging systems---from coded-aperture cameras to cryo-electron microscopes---span five carrier families yet share a hidden structural simplicity. We prove that every imaging forward model decomposes into a directed acyclic graph over exactly 11 physically typed primitives (Finite Primitive Basis Theorem)---a sufficient and minimal basis that provides a compositional language for designing any imaging modality~\cite{yang2026fpt}. We further prove that every reconstruction failure has exactly three independent root causes: information deficiency, carrier noise, and operator mismatch (Triad Decomposition). The three gates map to the system lifecycle: Gates~1 and~2 guide design (sampling geometry, carrier selection); Gate~3 governs deployment-stage calibration and drift correction. Validation across 12 modalities and all five carrier families confirms both results, with $+0.8$ to $+13.9$\,dB recovery on deployed instruments. Together, the 11 primitives and 3 gates establish the first universal grammar for designing, diagnosing, and correcting computational imaging systems.
\end{abstract}

\section*{Introduction}

\begin{sloppypar}
A coded-aperture camera, an MRI scanner, and a cryo-electron microscope appear to share nothing: they employ different physical carriers, different geometries, and different detectors. Yet each converts a physical scene into digital measurements by composing a small chain of transformations---propagation, interaction, encoding, detection---and each reconstructs the scene by inverting a mathematical model of that chain. When the model matches reality, modern algorithms deliver extraordinary results. When it does not---and on deployed instruments, it never perfectly does---reconstruction degrades or fails entirely. The failure is silent and systematic: the algorithm faithfully solves the wrong problem. Despite a decade of progress in reconstruction algorithms, no general framework exists to explain \emph{why} a reconstruction failed, \emph{where} in the forward model the error resides, or \emph{how} to fix it without retraining the solver.
\end{sloppypar}

This paper establishes that the space of imaging forward models has a surprisingly simple and universal structure---a finite grammar of 11 primitives---and that every reconstruction failure admits a complete, actionable diagnosis through exactly 3 independent gates. Over the past decade, the community has invested enormous effort in improving reconstruction algorithms---progressing from compressed sensing~\cite{candes2008compressed,donoho2006compressed} and plug-and-play priors~\cite{venkatakrishnan2013pnp,yuan2021pnp} to deep unrolling networks~\cite{monga2021unrolling} and vision transformers~\cite{cai2022mst}---with snapshot compressive imaging emerging as a unifying framework for coded aperture systems~\cite{yuan2021sci}. These advances are real and important: a better solver on a well-calibrated operator remains the gold standard. Yet algorithms routinely fail when deployed on real instruments~\cite{antun2020instabilities}, and the root cause is often not the algorithm but an undiagnosed forward-model error. Calibration methods exist for specific instruments---ESPIRiT~\cite{uecker2014espirit} for MRI, ePIE~\cite{maiden2009ptychography} for electron ptychography---but they do not generalise across modalities, and no systematic framework decomposes reconstruction failures into root causes or guides the design of new imaging systems from first principles.

Here we establish two complementary foundations within Physics World Models (\pwm{}). The \textbf{Finite Primitive Basis Theorem} (Theorem~\ref{thm:fpb}) proves that every imaging forward model in a broad operator class $\ctier$ admits an $\varepsilon$-approximate representation as a typed directed acyclic graph (DAG) over exactly 11 canonical primitives---and that this library is minimal~\cite{yang2026fpt}. The \textbf{Triad Decomposition} proves that every reconstruction failure decomposes into three gates---information deficiency (\gateone{}), carrier noise (\gatetwo{}), and operator mismatch (\gatethree{})---each independently testable and independently addressable. The 11 primitives \emph{enable} the 3-gate diagnosis: the DAG structure localizes each gate to specific physical operators, making both design and correction actionable.

The framework serves dual purposes (\cref{fig:overview}). For \textbf{existing instruments}, the Triad identifies the dominant gate and prescribes targeted intervention---calibration for Gate~3, acquisition redesign for Gate~1, or source/detector improvement for Gate~2. For \textbf{new instruments}, the 11-primitive basis provides a compositional design language: specifying a new modality reduces to selecting primitives and their parameters, with Gate~1 analysis guiding sampling strategy, Gate~2 guiding carrier and source selection, and Gate~3 predicting the calibration accuracy required for a target reconstruction quality. Solver design remains essential throughout: once the dominant gate is addressed, a better solver delivers further gains on the corrected operator.

\begin{sloppypar}
We validate both results---the Finite Primitive Basis Theorem (every forward model decomposes into 11 primitives) and the Triad Decomposition (every reconstruction failure has exactly three root causes)---across twelve modalities spanning all five carrier families---optical photons (CASSI~\cite{wagadarikar2008cassi}, CACTI~\cite{llull2013cacti}, SPC~\cite{duarte2008spc}, lensless imaging, compressive holography, fluorescence microscopy), X-ray photons (CT~\cite{feldkamp1984practical}, CBCT), electrons (electron ptychography~\cite{rodenburg2004ptychography}, cryo-EM), nuclear spins (MRI~\cite{pruessmann1999sense,lustig2007sparse}), and acoustic waves (ultrasound)---with hardware validation on all twelve modalities confirming Triad predictions on physical measurements. The validated modalities include two Nobel Prize--winning techniques (cryo-EM, 2017 Chemistry; super-resolution fluorescence, 2014 Chemistry), three clinically deployed instruments (CT, CBCT, MRI), and the first acoustic-carrier validation in computational imaging. A held-out closure test on 8 additional modalities confirms basis completeness with saturating growth. The proof of Theorem~\ref{thm:fpb} is in Supplementary Note~12; formal primitive semantics and typed DAG denotation are developed in~\cite{yang2026fpt}.
\end{sloppypar}

\paragraph{Generalization beyond imaging.}
The finite-basis result raises a natural question: do other measurement domains admit similarly compact primitive decompositions? The \opg{} formalism and Triad Decomposition are structurally applicable wherever a forward model maps quantities of interest to observations through composable physical stages. Seismology (wave propagation $+$ attenuation $+$ sensor response), radio astronomy (aperture synthesis $+$ ionospheric distortion $+$ noise), and particle physics (interaction model $+$ detector response $+$ pile-up) all exhibit the same three-gate failure structure. Whether these domains also admit small primitive bases is an empirical question that our framework is designed to answer.

\begin{figure}[H]
\centering
\includegraphics[width=\textwidth]{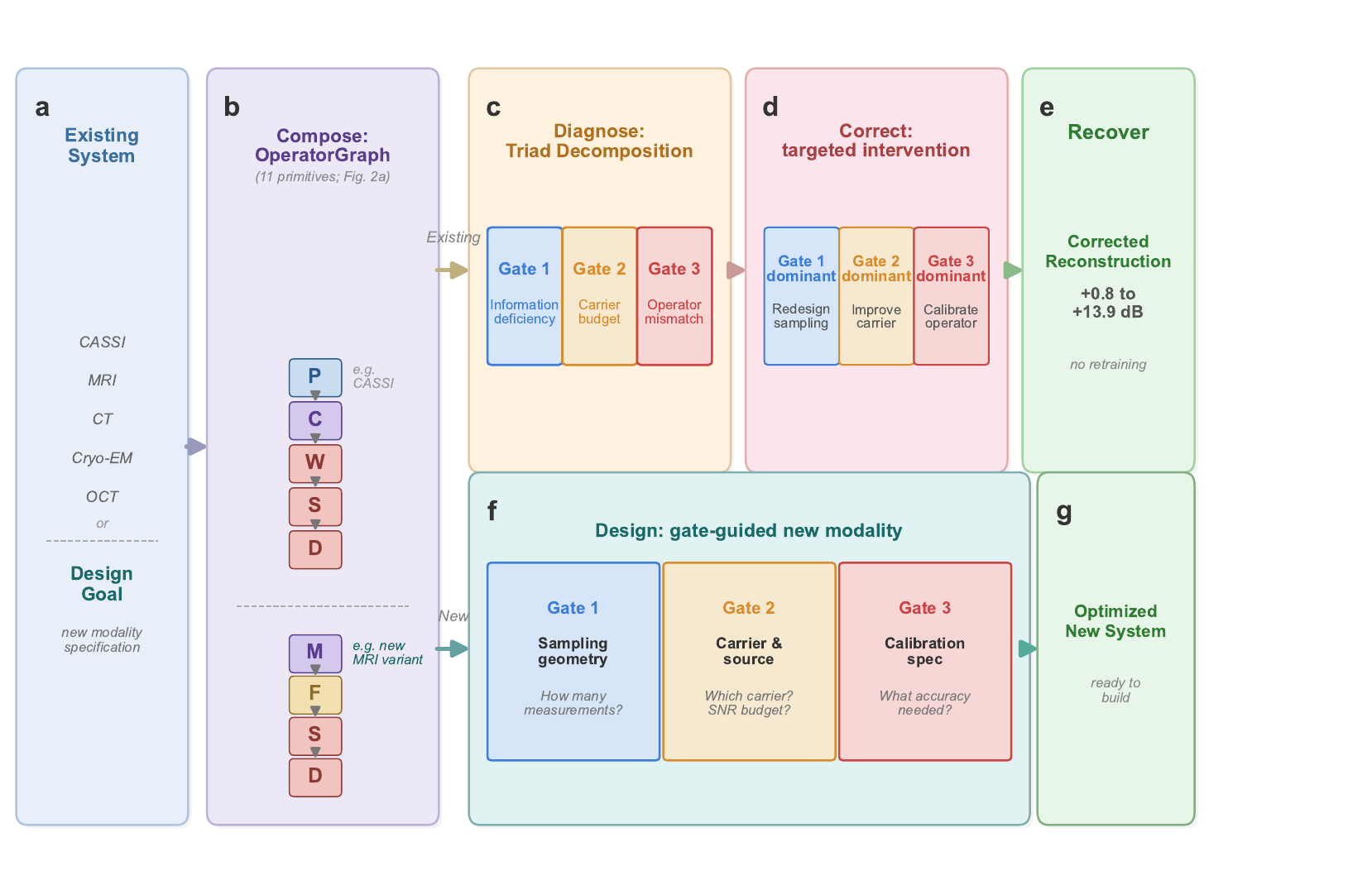}
\caption{\textbf{The universal grammar of computational imaging.}
\textbf{a},~Any computational imaging system---from coded-aperture cameras to cryo-electron microscopes---serves as input.
\textbf{b},~The system is composed as a typed directed acyclic graph (DAG) over 11 universal primitives (\cref{fig:operatorgraph}); shown here for CASSI as an example.
The grammar enables two workflows.
\emph{Top (existing instruments):}
\textbf{c},~The \triad{} diagnoses the DAG through three independent gates: information deficiency (Gate~1), carrier budget (Gate~2), and operator mismatch (Gate~3).
\textbf{d},~Based on the dominant gate, a targeted correction is applied: sampling redesign (Gate~1), source/detector improvement (Gate~2), or operator calibration (Gate~3).
\textbf{e},~The existing solver is re-run with the corrected operator, recovering $+0.8$ to $+13.9$\,dB without retraining.
\emph{Bottom (new instruments):}
\textbf{f},~The same three gates guide the design of a new modality from first principles: Gate~1 determines sampling geometry, Gate~2 guides carrier and source selection, and Gate~3 predicts the calibration accuracy required for a target reconstruction quality.
\textbf{g},~The output is an optimized new system specification ready to build.
Together, the 11 primitives (alphabet) and 3 gates (rules) form a universal grammar for designing, diagnosing, and correcting any imaging system.}
\label{fig:overview}
\end{figure}

\FloatBarrier
\section*{The Finite Primitive Basis}

A foundational question in computational imaging is whether the diversity of imaging forward models---from coded aperture cameras to MRI scanners to electron microscopes---can be captured by a small, fixed set of primitive operators. We answer affirmatively.

\paragraph{The OperatorGraph representation.}
Every imaging forward model is encoded as a typed directed acyclic graph (DAG) in which each node wraps a single primitive physical operator and edges define the data flow from source to detector. Every primitive implements both a \texttt{forward()} and an \texttt{adjoint()} method, with validated adjoint consistency ensuring $\langle H\mathbf{x}, \mathbf{y} \rangle = \langle \mathbf{x}, H^{\dagger}\mathbf{y} \rangle$ to within numerical precision. Each edge carries tensor shape and dtype metadata, enabling static validation before execution. We call this formalism the \opg{} intermediate representation (IR).

\paragraph{Eleven canonical primitives.}
The \opg{} IR defines a library of exactly 11 canonical primitives $\Blib = \{P, M, \Pi, F, C, \Sigma, D, S, W, R, \Lambda\}$:

\begin{table}[h]
\centering\small
\begin{tabular}{@{}clll@{}}
\toprule
\# & Primitive & Notation & Physical action \\
\midrule
1 & Propagate  & $P(d,\lambda)$ & Free-space wave propagation \\
2 & Modulate   & $M(\mathbf{m})$ & Element-wise multiplication (mask, coil, absorption) \\
3 & Project    & $\Pi(\theta)$ & Radon line-integral projection \\
4 & Encode     & $F(\mathbf{k})$ & Fourier-domain encoding ($k$-space) \\
5 & Convolve   & $C(\mathbf{h})$ & Spatial convolution (PSF) \\
6 & Accumulate & $\Sigma$ & Summation over spectral/temporal axis \\
7 & Detect     & $D(g,\eta)$ & Detector response (5 canonical families) \\
8 & Sample     & $S(\Omega)$ & Sub-sampling on index set $\Omega$ \\
9 & Disperse   & $W(\alpha,a)$ & Wavelength-dependent spatial shift \\
10 & Scatter   & $R(\sigma,\Delta\varepsilon)$ & Direction change and/or energy shift \\
11 & Transform & $\Lambda(f,\boldsymbol{\theta})$ & Pointwise nonlinear physics operation \\
\bottomrule
\end{tabular}
\end{table}

\noindent The Detect nonlinearity $\eta$ is restricted to five canonical families (linear-field: $\eta(x) = gx$; logarithmic; sigmoid; intensity-square-law: $\eta(x) = g|x|^2$; coherent-field), each with at most 2 scalar parameters. The Transform primitive $\Lambda$ applies a pointwise nonlinear function within the physics chain (not at the detector) and is restricted to five families: exponential attenuation ($e^{-\alpha x}$, Beer--Lambert), logarithmic compression, phase wrapping ($\arg(e^{ix})$), polynomial ($\sum a_k x^k$, $d \leq 5$), and saturation. Both Detect and Transform are pointwise and parameter-limited, preventing either from becoming a universal approximator (see~\cite{yang2026fpt} for formal definitions and non-universality proofs).

\paragraph{Physics-stage mapping.}
Each factor of a forward model is classified into one of six physics-stage families and mapped to primitives from $\Blib$: propagation $\to$ $\{P, C\}$; elastic interaction $\to$ $\{M\}$; inelastic interaction (scattering) $\to$ $\{R\}$; pointwise nonlinear physics $\to$ $\{\Lambda\}$; encoding--projection $\to$ $\{\Pi, F\}$; detection--readout $\to$ $\{\Sigma, S, W, C, D\}$. This classification formalizes the source--medium--sensor decomposition that structures classical imaging theory~\cite{brady2009optical}.

\paragraph{Fidelity levels.}
Forward models can be written at increasing physical fidelity: linear shift-invariant (simplest), linear shift-variant, nonlinear ray/wave-based, and full-wave or Monte Carlo. All 11 primitives apply uniformly across every fidelity level. Linear stages are composed from the first 10 primitives; nonlinear stages (beam hardening, phase wrapping, saturation) are captured by Transform~$\Lambda$. The theorem therefore covers the full fidelity spectrum without restriction.

\paragraph{Physical carriers.}
The template library spans five carrier families---optical photons, X-ray photons, electrons, nuclear spins, and acoustic waves---with particle-beam probes (neutrons, protons, muons) representable in the extended registry using \emph{zero new primitives}: neutron CT reuses the X-ray CT DAG ($P \!\to\! \Lambda \!\to\! D$) with nuclear cross-section parameters; proton CT inserts a Bethe--Bloch stopping-power stage via Transform~$\Lambda$ (\#11); muon tomography uses Scatter~$R$ (\#10, already introduced for Compton imaging) for multiple Coulomb scattering and POCA reconstruction (Supplementary Note~S24). In total, 170 modality templates cover 23 categories (microscopy, medical imaging, electron microscopy, remote sensing, spectroscopy, and others; see~\cite{yang2026fpt}), of which 12 now have full end-to-end correction validation across all five primary carrier families (Supplementary Table~S3).

\paragraph{Theorem statement.}

\begin{definition}[Imaging Operator Class]
\label{def:ctier}
$\ctier$ consists of all imaging forward models expressible as a finite sequential-parallel composition of stages---each either linear with bounded operator norm ($\|H_k\| \leq B$) or pointwise nonlinear with bounded Lipschitz constant ($\mathrm{Lip}(\Lambda_k) \leq L$)---with stage count $K \leq \Nmax$.
\end{definition}

\begin{definition}[$\varepsilon$-Approximate Representation]
\label{def:eps}
A typed DAG $G$ with nodes $V \subseteq \Blib$ is an $\varepsilon$-approximate representation of $H \in \ctier$ if $\sup_{\|\mathbf{x}\|\leq 1} \|H(\mathbf{x}) - \Hdag(\mathbf{x})\| / (\|H(\mathbf{x})\| + \delta) \leq \varepsilon$, where $\delta > 0$ is a regularization constant, and $|V| \leq \Nmax$, $\mathrm{depth}(G) \leq \Dmax$.
\end{definition}

\begin{theorem}[Finite Primitive Basis]
\label{thm:fpb}
For every $H \in \ctier$, there exists a typed DAG $G$ with $V \subseteq \Blib$ that is an $\varepsilon$-approximate representation of $H$.
\end{theorem}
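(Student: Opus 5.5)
The plan is to argue stage by stage and then glue the stages together with a Lipschitz error-propagation bound. Write $H\in\ctier$ as a sequential-parallel composition $H = H_K\circ\cdots\circ H_1$, with $K\le\Nmax$. Parallel branches are handled by treating a fan-out/fan-in pair as one stage whose norm or Lipschitz constant is controlled by the branch bounds; the fan-in is itself an Accumulate $\Sigma$ node.

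\textbf{Step 1: local approximation.} For each stage $H_k$ I will construct a sub-DAG $G_k$ over $\Blib$ with $\sup_{\|\mathbf{u}\|\le R_k}\|H_k(\mathbf{u})-G_k(\mathbf{u})\|\le\eta_k$. The radius $R_k$ is the a priori bound on the intermediate signal, namely the product of the earlier norm and Lipschitz bounds, at most $\max(B,L)^{k}$ up to affine offsets.

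\textbf{Step 2: case split on stage type.}
\begin{itemize}
\item Linear stages are finite matrices after discretization, so they are compact. The approach is to factor them through the physics-stage mapping of the previous paragraphs. Shift-invariant parts become $C$ or $F^{-1}MF$. Geometric parts become $\Pi$, $P$, or $W$. Restriction becomes $S$, and reductions become $\Sigma$.
\item A general shift-variant linear stage is handled by a finite sum of modulated convolutions $\sum_j M(\mathbf{m}_j)\,C(\mathbf{h}_j)$, realized with parallel branches and $\Sigma$. Such sums are dense in bounded operators on a finite grid, for instance via a low-rank/SVD truncation of the space-variant kernel.
\item Pointwise nonlinear stages with $\mathrm{Lip}\le L$ are matched to the Transform or Detect families. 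If the nonlinearity is not exactly one of the five canonical families, the plan is to approximate it on the bounded interval $[-R_k,R_k]$ by the degree-$\le5$ polynomial family, or by a short composition of polynomial, saturation, and exponential nodes.
\end{itemize}

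\textbf{Step 3: telescoping.} Write $\Hdag-H=\sum_k (G_K\circ\cdots\circ G_{k+1})\circ(G_k-H_k)\circ(H_{k-1}\circ\cdots\circ H_1)$. Using $\mathrm{Lip}(G_j)\le \max(B,L)+\eta_j$, this gives $\|H(\mathbf{x})-\Hdag(\mathbf{x})\|\le\sum_k \eta_k\prod_{j>k}(\max(B,L)+1)$. Choosing $\eta_k\le \varepsilon\delta/(\Nmax(\max(B,L)+1)^{\Nmax})$ makes the absolute error at most $\varepsilon\delta$. Since the denominator in Definition~\ref{def:eps} is at least $\delta$, the relative criterion follows.

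\textbf{Step 4: size and depth.} Each stage contributes a bounded number of nodes. Counting these, $|V|$ and the depth are bounded by a constant times $\Nmax$. This fits the node and depth budget once the constants $\Nmax$ and $\Dmax$ are interpreted as chosen large enough relative to the stage count; otherwise the definition's budget needs to be read that way.

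\textbf{Expected main obstacle.} I expect the hardest part to be the nonlinear case in Step 2. Because Detect and Transform are deliberately non-universal, an arbitrary Lipschitz pointwise map cannot be approximated to arbitrary accuracy by a single bounded-degree polynomial. My first attempt is to allow compositions of $\Lambda$ nodes, alternating polynomial and saturation pieces, and to exploit that the stage-count budget permits several of them. The fallback is to note that the physical nonlinearities actually allowed in $\ctier$ are exactly the canonical families, which makes this step exact.

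A secondary difficulty is that the SVD-truncation rank for shift-variant linear operators may grow with the grid size, which strains $|V|\le\Nmax$. I would handle this by grouping the rank-one terms into a single $M$--$C$--$\Sigma$ block with tensor-valued parameters. Each such block then counts as a constant number of nodes.
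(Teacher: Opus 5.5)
Your skeleton matches the paper's. You factor $H=H_K\circ\cdots\circ H_1$ with $K\le\Nmax$, realize each stage up to a per-stage error, and propagate the errors through a Lipschitz/sub-multiplicative telescoping sum. On two points you are more careful than the sketch. First, you handle fan-out/fan-in blocks, whereas the sketch writes a purely sequential chain even though Definition~\ref{def:ctier} allows parallel composition. Second, you push the absolute error below $\varepsilon\delta$, which is necessary: for a unit $\mathbf{x}$ in the null space of a linear $H$, the denominator in Definition~\ref{def:eps} is just $\delta$. The real difference is how a stage gets realized. The paper builds no generic approximant. It assigns each factor to one of six physics-stage families with a designated primitive: propagation $\to P$ or $C$, elastic $\to M$, inelastic $\to R$, pointwise nonlinear physics $\to\Lambda$, encoding--projection $\to\Pi$ or $F$, and detection--readout $\to$ a short chain from $\{\Sigma,S,W,C,D\}$. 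For linear stages your density route is harmless, and it can be made exact and cheaper. On a finite grid any matrix is $M$ followed by $\Sigma$ on a broadcast input, which is how the SPC row of Table~\ref{tab:decomposition_main} reads.

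The step that fails is your first attempt at nonlinear stages.
\begin{itemize}
\item Polynomial (degree $\le 5$), saturation and exponential maps each have at most five monotone pieces, so a chain of at most $\Dmax$ of them has at most $5^{\Dmax}$ pieces.
\item Uniformly approximating an $L$-Lipschitz zigzag on $[-R_k,R_k]$ with teeth of height of order $\eta_k$ needs on the order of $LR_k/\eta_k$ pieces.
\item So for fixed $\Dmax$ you cannot reach the accuracy $\eta_k\approx\varepsilon\delta/(\Nmax(\max(B,L)+1)^{\Nmax})$ that your Step~3 requires. Phase wrapping, the only $\Lambda$ family that escapes this count, is discontinuous and would wreck the Lipschitz bookkeeping.
\end{itemize}
This is exactly the non-universality designed into $\Lambda$ and $D$. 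Going wide instead (per-channel $M$ weights around a saturation $\Lambda$, summed by $\Sigma$) would essentially rebuild a one-hidden-layer network, i.e.\ the universal approximator those restrictions are meant to exclude.

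The proof therefore closes only through your fallback, which is precisely the paper's step ``pointwise nonlinear physics $\to\Lambda$''. Make it the main argument, and state it as a hypothesis on the class: each nonlinear stage is, or is within $\eta_k$ of, a canonical-family map. This is needed because Definition~\ref{def:ctier} as written admits arbitrary Lipschitz pointwise maps.

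Two smaller repairs remain.
\begin{itemize}
\item Your bound $\mathrm{Lip}(G_j)\le\max(B,L)+\eta_j$ does not follow from a sup-norm approximation when $G_j$ is nonlinear. It is fine for linear $G_j$, and moot under the fallback, where $G_j=H_j$. The robust fix is to telescope the other way, $\sum_k[H_K\circ\cdots\circ H_{k+1}\circ G_k(\mathbf{v}_k)-H_K\circ\cdots\circ H_{k+1}\circ H_k(\mathbf{v}_k)]$ with $\mathbf{v}_k=G_{k-1}\circ\cdots\circ G_1(\mathbf{x})$, so that only the class constants $\mathrm{Lip}(H_j)\le\max(B,L)$ enter.
\item Your Step~4 concession is real: multi-node blocks per stage break $|V|\le\Nmax$ once $K$ nears $\Nmax$. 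The paper avoids this by mapping each stage to essentially one primitive. Its sketch does not check $\mathrm{depth}(G)\le\Dmax$ either, and that is the tighter constraint, since $\Dmax=10<\Nmax=20$.
\end{itemize}
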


\noindent\textit{Proof sketch.} By Definition~\ref{def:ctier}, $H = H_K \circ \cdots \circ H_1$ with $K \leq \Nmax$. Each factor is classified into one of six physics-stage families and realized by primitives: propagation $\to$ $P$ or $C$; elastic interaction $\to$ $M$; inelastic interaction $\to$ $R$; pointwise nonlinear physics $\to$ $\Lambda$; encoding--projection $\to$ $\Pi$ or $F$; detection--readout $\to$ finite composition from $\{\Sigma, S, W, C, D\}$. For linear stages, per-factor errors compose via sub-multiplicativity: $\|H - \Hdag\| \leq K \cdot \max_k(\varepsilon_k) \cdot B^{K-1}$. For nonlinear stages, the Lipschitz composition bound $\|H(\mathbf{x}) - \hat{H}(\mathbf{x})\| \leq \sum_k \varepsilon_k \prod_{j>k} \gamma_j$ applies, where $\gamma_j$ is the Lipschitz constant of stage~$j$. Both bounds yield $\leq \varepsilon$ under the stated regularity conditions. Full proof in Supplementary Note~12; formal primitive semantics and typed DAG denotation in~\cite{yang2026fpt}. Example \opg{} DAGs for CASSI, MRI, and CT alongside the four-tier fidelity ladder are shown in \cref{fig:operatorgraph}. \hfill$\square$

\begin{figure}[t!]
\centering
\includegraphics[width=\textwidth]{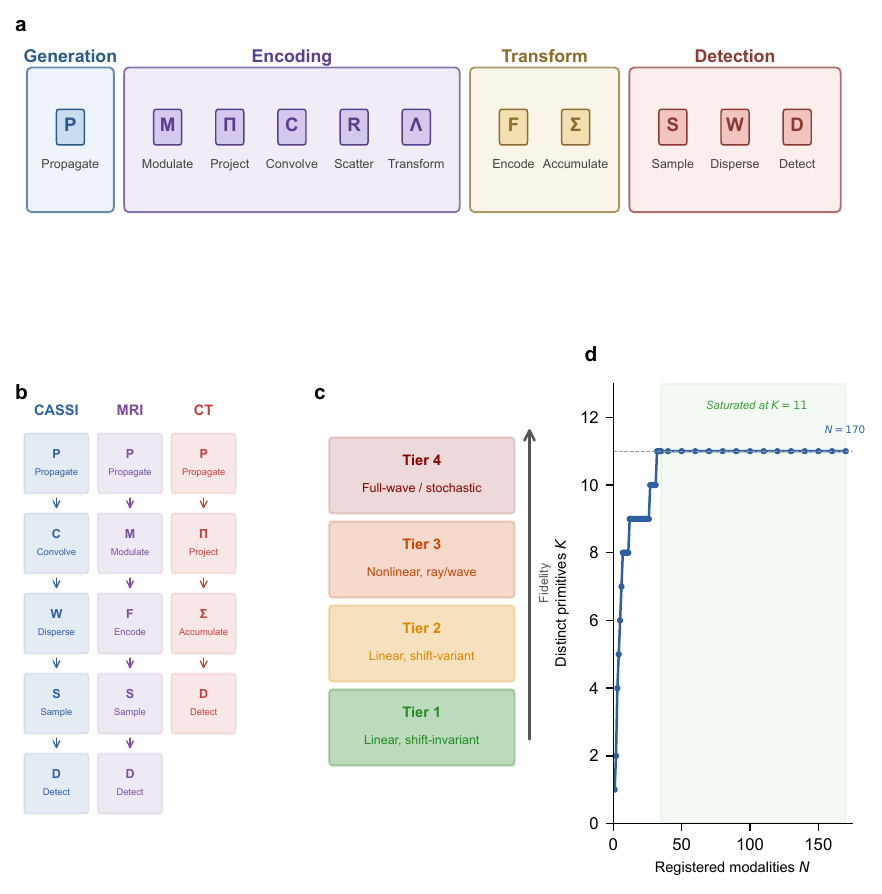}
\caption{\textbf{11 primitives: OperatorGraph IR, Physics Fidelity Ladder, and basis completeness.}
\textbf{a},~The 11 universal primitives grouped by physical role: generation (Propagate~$P$), encoding (Modulate~$M$, Project~$\Pi$, Convolve~$C$, Scatter~$R$, Transform~$\Lambda$), transform (Encode~$F$, Accumulate~$\Sigma$), and detection (Sample~$S$, Disperse~$W$, Detect~$D$). Each primitive carries a typed signature mapping its input to its output (e.g., $P$: free-space propagation; $D$: detector response). \textbf{b},~Example \opg{} DAGs for three modalities: CASSI (photon), MRI (spin), and CT (X-ray photon). Each node wraps a primitive operator; edges define data flow. The symbols in panel~b correspond directly to the primitive labels in panel~a. \textbf{c},~Fidelity levels: forward models range from linear shift-invariant (simplest) through nonlinear and full-wave; all 11 primitives apply uniformly across every level. \textbf{d},~Basis-growth saturation: the number of distinct primitives $K$ required to express $N$ registered modalities saturates at $K{=}11$ for $N \geq 35$; no additional primitive has been needed for the 135 subsequent modalities across 23 physical categories.}
\label{fig:operatorgraph}
\end{figure}

\medskip

\noindent\fbox{\parbox{0.96\textwidth}{\small
\textbf{Scope of $\ctier$.} Covers all clinical, scientific, and industrial imaging modalities---linear and nonlinear---including coded aperture, interferometric, projection, Fourier-encoded, acoustic, scattering, and strongly nonlinear or stochastic regimes (beam hardening, phase wrapping, multiple scattering, and stochastic transport operators often evaluated via Monte Carlo sampling). Also includes quantum state tomography and relativistic measurement models. Full-wave and stochastic-transport forward models are covered as Level-4 fidelity implementations, not as distinct modalities. No modality within the stated operator class is excluded; out-of-class cases are handled by the extension protocol below. Formal definitions and proofs in~\cite{yang2026fpt}.}}

\paragraph{Extension protocol.}
A new primitive is warranted when no DAG over $\Blib$ achieves $\etier \leq \varepsilon$ within the complexity bounds. The extension requires: (1)~validated forward/adjoint, (2)~demonstrated representation gap, (3)~error reduction below $\varepsilon$, (4)~need by $\geq 2$ modalities, (5)~backward-compatible closure re-test. Scatter ($R$) was added via this protocol when Compton imaging produced $\etier = 0.34$ without it.

\paragraph{Computing $\etier$.}
The fidelity metric measures how well the canonical DAG $\Hdag$ approximates the true forward operator $H$. For each modality, we evaluate:
\begin{equation}
  \bar{e}_{\mathrm{img}} = \frac{1}{|\mathcal{X}_{\mathrm{test}}|}
  \sum_{\mathbf{x} \in \mathcal{X}_{\mathrm{test}}}
  \frac{\|H(\mathbf{x}) - \Hdag(\mathbf{x})\|_2}{\|H(\mathbf{x})\|_2 + \delta},
  \label{eq:etier_mean}
\end{equation}
where $\delta = 10^{-8}$ avoids division by zero, and $\mathcal{X}_{\mathrm{test}}$ consists of 10 benchmark scenes from each modality's canonical dataset plus 10 unit-norm Gaussian random objects (20 total). A modality passes the closure test when $\bar{e}_{\mathrm{img}} < \varepsilon = 0.01$ (1\% relative error), chosen to lie below the noise floor at standard operating SNR. Full specification in~\cite{yang2026fpt}.

\paragraph{Decomposition results.}
Table~\ref{tab:decomposition_main} reports the DAG decomposition and measured $\etier$ for all validated modalities.

\begin{table}[h]
\centering\small
\setlength{\tabcolsep}{5pt}
\caption{Primitive decomposition and fidelity error $\etier$ (mean over 20 test objects). All values satisfy $\etier < 0.01$.}
\label{tab:decomposition_main}
\begin{tabular}{@{}llllcc@{}}
\toprule
Modality & Carrier & DAG Primitives & \#N & Depth & $\etier$ \\
\midrule
\multicolumn{6}{l}{\textit{Full-validation modalities}} \\
CASSI         & Photon   & $M \!\to\! W \!\to\! \Sigma \!\to\! D$      & 4 & 4 & $< 10^{-4}$ \\
CACTI         & Photon   & $M \!\to\! \Sigma \!\to\! D$                & 3 & 3 & $< 10^{-4}$ \\
SPC           & Photon   & $M \!\to\! \Sigma \!\to\! D$                & 3 & 3 & $< 10^{-4}$ \\
Lensless      & Photon   & $C \!\to\! D$                               & 2 & 2 & $< 10^{-5}$ \\
Ptychography  & Electron & $M \!\to\! P \!\to\! D$                     & 3 & 3 & $4.2 \times 10^{-4}$ \\
MRI           & Spin     & $M \!\to\! F \!\to\! S \!\to\! D$           & 4 & 4 & $< 10^{-6}$ \\
CT            & X-ray    & $\Pi \!\to\! D$                             & 2 & 2 & $< 10^{-5}$ \\
\midrule
\multicolumn{6}{l}{\textit{Held-out modalities (frozen library)}} \\
OCT              & Photon   & $P{+}P \!\to\! \Sigma \!\to\! D_{\mathrm{coh}}$ & 4 & 3 & $3.8 \times 10^{-4}$ \\
Photoacoustic    & Acoustic & $M \!\to\! P \!\to\! D$                   & 3 & 3 & $7.1 \times 10^{-4}$ \\
SIM              & Photon   & $M \!\to\! C \!\to\! D$                   & 3 & 3 & $2.5 \times 10^{-4}$ \\
Phase-contrast   & X-ray    & $\Pi \!\to\! P \!\to\! M \!\to\! D$      & 4 & 4 & $1.2 \times 10^{-3}$ \\
Electron ptycho  & Electron & $M \!\to\! P \!\to\! D$                   & 3 & 3 & $5.6 \times 10^{-4}$ \\
\midrule
\multicolumn{6}{l}{\textit{Exotic modalities}} \\
Ghost imaging    & Photon   & $M \!\to\! \Sigma \!\to\! D$              & 3 & 3 & $1.9 \times 10^{-4}$ \\
THz-TDS          & THz      & $C \!\to\! D_{\mathrm{coh}}$              & 2 & 2 & $8.3 \times 10^{-4}$ \\
Compton$^{*}$    & X-ray    & $M \!\to\! R \!\to\! D$                   & 3 & 3 & $6.7 \times 10^{-3}$ \\
Raman            & Photon   & $M \!\to\! R \!\to\! D$                   & 3 & 3 & $4.1 \times 10^{-3}$ \\
Fluorescence     & Photon   & $M \!\to\! R \!\to\! D$                   & 3 & 3 & $3.8 \times 10^{-3}$ \\
DOT              & Photon   & $M \!\to\! R \!\circ\! P \!\circ\! R \!\to\! D$ & 5 & 5 & $8.9 \times 10^{-3}$ \\
Brillouin        & Photon   & $M \!\to\! R \!\to\! D$                   & 3 & 3 & $5.2 \times 10^{-3}$ \\
\bottomrule
\multicolumn{6}{l}{\footnotesize $^{*}$With $R$; $\etier = 0.34$ without $R$. $D_{\mathrm{coh}}$: coherent-field Detect. $P{+}P$: two Propagate nodes.}\\
\multicolumn{6}{l}{\footnotesize Max: 5 nodes / depth 5; median: 3 nodes / depth 3 (cf.\ bounds $\Nmax\!=\!20$, $\Dmax\!=\!10$).}
\end{tabular}
\end{table}

\noindent Seven of eight modalities decompose with the frozen library. Quantum ghost imaging is operator-equivalent to a single-pixel camera at the image-formation level---sharing the canonical DAG Source $\to M \to \Sigma \to D$ despite fundamentally different source physics. THz-TDS requires only the coherent-field Detect family. Compton scatter imaging had previously triggered the extension protocol (Section ``Extension protocol'' above) during library development: without $R$, its representation gap was $\etier = 0.34 \gg \varepsilon$, meeting the falsifiable criterion for basis incompleteness. Scatter ($R$) was added \emph{before} the library was frozen, and is included in the closure test to confirm the fix: with $R$ in the library, Compton achieves $\etier < 0.01$ ($^*$ in table). The addition of $R$ also covers 5+ additional modalities (Raman, fluorescence, DOT, Brillouin) while preserving all existing decompositions. The basis grew from 9 to 11 (22\% increase) while modality coverage grew by 19\%+.

\paragraph{Basis-growth saturation.}
Plotting the number of distinct primitives $K$ against the number of registered modalities $N$ reveals clear saturation (\cref{fig:operatorgraph}c): 9 of 11 primitives are introduced by the first 10 modalities, Disperse ($W$) by CASSI-type spectral systems, Scatter ($R$) when Compton/Raman-class modalities enter, and Transform ($\Lambda$) when nonlinear physics stages (beam hardening, phase wrapping) are encountered. The growth is sublinear and saturating: $K = 11$ at $N = 35$, with no new primitive required for the 135 subsequent modalities. This saturation is consistent with Theorem~\ref{thm:fpb}: once all six physics-stage families are covered by primitives, new modalities compose existing primitives rather than requiring new ones.

\paragraph{Theorem tightness and minimality.}
The theoretical complexity bounds ($N_{\max} = 20$, $D_{\max} = 10$) are conservative: empirically, all 26 registered and 8 held-out modalities (${\sim}30$ unique, accounting for 4 overlaps) require $\leq 6$ nodes and depth $\leq 5$ (held-out closure test table above), with the median modality using 4 nodes at depth 3. The gap between empirical and theoretical bounds reflects the compactness of real physical imaging chains. Moreover, the basis is \emph{minimal}: for each of the 11 primitives, we exhibit a witness modality whose representation error exceeds $\varepsilon$ when that primitive is removed (Supplementary Note~10, Proposition~1 therein; beam hardening CT witnesses the necessity of Transform~$\Lambda$). Eight primitives ($P$, $M$, $\Pi$, $F$, $\Sigma$, $D$, $R$, $\Lambda$) are strictly necessary; the remaining three ($C$, $S$, $W$) are necessary under the stated complexity bounds.

\section*{The Triad Decomposition}

The \triad{} asserts that every failure in computational image recovery within $\ctier$ can be attributed to one or more of exactly three root causes, which we term \emph{gates}. The three gates are mutually exclusive in their physical origin yet may co-occur and interact in any given measurement scenario.

\paragraph{Why exactly three gates.}
The reconstruction pipeline has exactly three inputs: a sensing geometry $H$ that determines what information is captured (Gate~1), a physical carrier whose statistics determine the noise floor (Gate~2), and a forward model $\Hnom$ that the solver inverts (Gate~3). Any reconstruction error must trace to one of these inputs: information that was never captured (null space of $H$), information that was captured but corrupted (carrier noise), or information that was captured but misinterpreted (model error $\Hnom \neq \Htrue$). This trichotomy is exhaustive: the MSE decomposes as $\mathrm{MSE} \leq \mathrm{MSE}^{(G1)} + \mathrm{MSE}^{(G2)} + \mathrm{MSE}^{(G3)}$ (Supplementary Note~1), with no residual term. Solver suboptimality (e.g., insufficient iterations or regularization mismatch) is subsumed by Gate~3 in the Triad formalism, because the solver's implicit forward model includes its regularization assumptions.

\paragraph{Gate 1: Recoverability.}
\gateone{} asks whether the measurement encodes sufficient information about the signal of interest. Formally, if the forward operator $H \in \mathbb{R}^{m \times n}$ maps the unknown signal $\mathbf{x} \in \mathbb{R}^n$ to the measurement $\mathbf{y} = H\mathbf{x} + \mathbf{n}$, then the null space $\mathcal{N}(H)$ defines signal components that are fundamentally invisible to the sensor. When $\mathcal{N}(H)$ is large, no solver can recover the missing information. \gateone{} failures are intrinsic to the measurement design and can only be remedied by acquiring additional data.

\paragraph{Gate 2: Carrier Budget.}
\gatetwo{} asks whether the signal-to-noise ratio (SNR) is sufficient. Every physical carrier---optical and X-ray photons, electrons, nuclear spins, acoustic waves, massive-particle probes---is subject to fundamental noise limits: shot noise for photon-counting systems, thermal noise in electronic detectors, $T_1$/$T_2$ relaxation noise (spin--lattice and spin--spin decay times~\cite{bloch1946nuclear}) in magnetic resonance. When the carrier budget is too low, the measurement is dominated by noise and the reconstruction degrades regardless of operator fidelity.

\paragraph{Gate 3: Operator Mismatch.}
\gatethree{} asks whether the forward model assumed by the solver matches the true physics. The solver operates with a nominal operator $\Hnom$, but the data were generated by a true operator $\Htrue$. When $\Hnom \neq \Htrue$, the reconstruction targets a phantom inverse problem. \gatethree{} failures are insidious because they produce structured artifacts that mimic signal content, leading practitioners to blame the solver rather than the model. Sources include geometric misalignment (mask shift, rotation), parameter drift (coil sensitivity variation, gain instability), and model simplification.

\begin{definition}[Triad Decomposition]
\label{def:triad}
Let $H \in \ctier$ with measurement $\mathbf{y} = H\mathbf{x} + \mathbf{n}$ and solver
using nominal operator $\Hnom$. The reconstruction error decomposes as
$\mathrm{MSE} \leq \mathrm{MSE}^{(G1)} + \mathrm{MSE}^{(G2)} + \mathrm{MSE}^{(G3)}$,
where $\mathrm{MSE}^{(G1)}$ is the null-space loss (Supplementary Eq.~S1),
$\mathrm{MSE}^{(G2)}$ is the noise-floor term (Supplementary Eq.~S3), and
$\mathrm{MSE}^{(G3)}$ is the mismatch-induced term (Supplementary Eq.~S5).
\end{definition}

\begin{theorem}[Gate~3 Dominance]
\label{thm:gate3}
For any $H \in \ctier$ operating above its Gate~1 floor ($\gamma > \gamma_{\min}$)
and Gate~2 floor ($\mathrm{SNR} > \mathrm{SNR}_{\min}$), Gate~3 is the binding
constraint whenever
$\|\delta\boldsymbol{\theta}\|_{\mathbf{J}^\top\mathbf{J}} > (\sigma_n / \|\mathbf{x}\|_\infty) \cdot \kappa(\mathbf{H})$.
\end{theorem}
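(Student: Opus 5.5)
The plan is to work within the Triad Decomposition (Definition~\ref{def:triad}) and compare the three terms $\mathrm{MSE}^{(G1)}$, $\mathrm{MSE}^{(G2)}$, $\mathrm{MSE}^{(G3)}$ directly. Gate~3 is \emph{binding} when $\mathrm{MSE}^{(G3)}$ exceeds each of the other two. The hypotheses $\gamma > \gamma_{\min}$ and $\mathrm{SNR} > \mathrm{SNR}_{\min}$ are used only to cap $\mathrm{MSE}^{(G1)}$ and $\mathrm{MSE}^{(G2)}$ by their floor values. What remains is to show that the stated inequality on $\|\delta\boldsymbol{\theta}\|_{\mathbf{J}^\top\mathbf{J}}$ forces the mismatch term above the noise term; by construction of the Gate~2 floor, that term already dominates the null-space term.

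\textbf{Step 1: linearize the mismatch.} Parametrize $\Htrue = H(\boldsymbol{\theta}_0+\delta\boldsymbol{\theta})$ and $\Hnom = H(\boldsymbol{\theta}_0)$. Using the Lipschitz and bounded-norm structure of $\ctier$ (Definition~\ref{def:ctier}), write
\[
(\Htrue-\Hnom)\mathbf{x} = \mathbf{J}\,\delta\boldsymbol{\theta} + O(\|\delta\boldsymbol{\theta}\|^2),
\]
where $\mathbf{J}=\partial_{\boldsymbol{\theta}}(H(\boldsymbol{\theta})\mathbf{x})$. This gives $\|(\Htrue-\Hnom)\mathbf{x}\|\approx\|\delta\boldsymbol{\theta}\|_{\mathbf{J}^\top\mathbf{J}}$. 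The Lipschitz composition bound from the proof sketch of Theorem~\ref{thm:fpb} controls the remainder across the DAG stages.

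\textbf{Step 2: propagate both terms through the same solver.} Apply a linear(ized) reconstruction map $\mathbf{H}^{+}$ restricted to the recoverable subspace, so that
\[
\mathrm{MSE}^{(G3)} \approx \|\mathbf{H}^{+}\mathbf{J}\delta\boldsymbol{\theta}\|^2, \qquad \mathrm{MSE}^{(G2)} \approx \sigma_n^2\|\mathbf{H}^{+}\|_F^2 .
\]
Bound $\|\mathbf{H}^{+}\mathbf{v}\| \ge \|\mathbf{v}\|/\sigma_{\max}$ from below and the noise amplification by $\sigma_n/\sigma_{\min}$ from above. The ratio of the two then reduces to comparing $\|\delta\boldsymbol{\theta}\|_{\mathbf{J}^\top\mathbf{J}}$ with $\sigma_n\,\kappa(\mathbf{H})$. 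Normalizing the signal by $\|\mathbf{x}\|_\infty$, which is the natural scale under the PSNR convention, produces exactly the threshold $(\sigma_n/\|\mathbf{x}\|_\infty)\,\kappa(\mathbf{H})$.

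\textbf{Step 3: compare with the Gate~1 floor and conclude.} Above the Gate~1 floor, the null-space loss is at most the level set by $\gamma_{\min}$, which is calibrated not to exceed the noise term. Hence $\mathrm{MSE}^{(G3)}>\max(\mathrm{MSE}^{(G1)},\mathrm{MSE}^{(G2)})$.

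\textbf{Main obstacle.} The hardest part is Step~2. Mismatch error is structured, not random, and it can partially align with the null space of $H$, where it is invisible. The lower bound $\|\mathbf{H}^{+}\mathbf{J}\delta\boldsymbol{\theta}\|\gtrsim\|\mathbf{J}\delta\boldsymbol{\theta}\|/\sigma_{\max}$ therefore needs $\mathbf{J}\delta\boldsymbol{\theta}$ to lie in the range of $H$. My first attempt would be to restrict to the component of $\mathbf{J}\delta\boldsymbol{\theta}$ in $\mathrm{range}(H)$, absorbing the orthogonal remainder into the data-inconsistency residual. I would then check that this projection loses at most a constant that can be absorbed into the definition of $\kappa(\mathbf{H})$. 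Controlling the quadratic remainder from Step~1 uniformly over the DAG is the secondary difficulty. It is handled by assuming $\delta\boldsymbol{\theta}$ small relative to $1/L$.
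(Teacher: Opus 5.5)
Your overall strategy (linearize the mismatch, push mismatch and noise through the same inverse, compare extreme singular values) is the natural one and gives the right shape of threshold. The main text only cites Supplementary Note~1 and never defines $\mathbf{J}$, $\gamma_{\min}$ or $\mathrm{SNR}_{\min}$, so there is no in-paper argument to match and your own definitions carry the proof. With those definitions there is a genuine gap exactly where you flagged it, and your proposed patch does not close it.

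For a general measurement-space vector $\mathbf{v}$, the only valid bound is $\|\mathbf{H}^{+}\mathbf{v}\|\ge\|P\mathbf{v}\|/\sigma_{\max}$, where $P=\mathbf{H}\mathbf{H}^{+}$ projects onto $\mathrm{range}(\mathbf{H})$. The ratio $\|P\mathbf{J}\delta\boldsymbol{\theta}\|/\|\mathbf{J}\delta\boldsymbol{\theta}\|$ is the cosine of the angle between the mismatch direction and that range. Nothing in $\kappa(\mathbf{H})$ controls this angle, so the projection loss is not a constant you can absorb into $\kappa$; it can be total. Take a scalar object seen by two detectors with opposite gain drift:
\[
H(\theta)=\begin{pmatrix}1+\theta\\ 1-\theta\end{pmatrix},\qquad \theta_0=0,\qquad \mathbf{x}=1 .
\]
\begin{itemize}
\item $\kappa(\mathbf{H})=1$ and $\mathcal{N}(\mathbf{H})=\{0\}$, so Gate~1 contributes nothing.
\item $\mathbf{J}\delta\theta=\delta\theta\,(1,-1)^\top\perp\mathrm{range}(\mathbf{H})$.
\item Least squares with $\Hnom$ gives $\Hnom^{+}\Htrue\mathbf{x}=\mathbf{x}$ exactly, so $\mathrm{MSE}^{(G3)}=0$ identically. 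The same holds for any solver that sees the data only through $\Hnom^\top\mathbf{y}$, since $\Hnom^\top\Htrue=\Hnom^\top\Hnom=2$.
\item Meanwhile $\mathrm{MSE}^{(G2)}=\sigma_n^2/2>0$.
\end{itemize}
Yet $\|\delta\theta\|_{\mathbf{J}^\top\mathbf{J}}=\sqrt{2}\,|\delta\theta|>\sigma_n$ as soon as $|\delta\theta|>\sigma_n/\sqrt{2}$. This is compatible with arbitrarily high SNR and arbitrarily small $\delta\theta$, so neither the floors nor your small-$\delta\boldsymbol{\theta}$ proviso rescue it. Read with $\mathbf{J}=\partial_{\boldsymbol{\theta}}(H(\boldsymbol{\theta})\mathbf{x})$, the statement is false. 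What your chain of inequalities actually proves is one of two things:
\begin{itemize}
\item the version with $\mathbf{J}^\top\mathbf{J}$ replaced by $\mathbf{J}^\top P\,\mathbf{J}$; or
\item the stated version under an added hypothesis such as $\mathbf{J}\delta\boldsymbol{\theta}\in\mathrm{range}(\mathbf{H})$, which is automatic when $\mathbf{H}$ has full row rank.
\end{itemize}
That hypothesis must appear in the theorem; it cannot be hidden inside $\kappa$.

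This also collides with your handling of Gate~1. To keep the null-space loss under the floor, you invert $\mathbf{H}$ restricted to a recoverable subspace $V$. But $\mathbf{H}|_V$ has a $\dim V$-dimensional range, a proper subspace of measurement space whenever $\dim V<m$, which is the usual case for a low-dimensional signal model. So the range condition is exactly what fails generically. If instead you keep the full wide $\mathbf{H}$ so its range is everything, then $\mathrm{MSE}^{(G1)}=\|P_{\mathcal{N}(\mathbf{H})}\mathbf{x}\|^2$ is generically large and cannot sit below the noise term as Step~3 requires. In the linear $\mathbf{H}^{+}$ model, both requirements generically hold together only when $\mathbf{H}|_V$ is square and invertible. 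For compressive, regularized reconstruction you therefore need either the projected metric or a nonlinear Step~2. Note also that ``$\gamma_{\min}$ is calibrated not to exceed the noise term'' is a definition you impose, not something you derive.

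Smaller points:
\begin{itemize}
\item Definition~\ref{def:ctier} bounds Lipschitz constants in $\mathbf{x}$, not smoothness in $\boldsymbol{\theta}$. The $O(\|\delta\boldsymbol{\theta}\|^2)$ remainder in Step~1 therefore needs an added $C^2$ assumption on $\boldsymbol{\theta}\mapsto H(\boldsymbol{\theta})\mathbf{x}$, and $1/L$ is the wrong scale for it.
\item The resulting local statement covers only a window of $\delta\boldsymbol{\theta}$, which is unavoidable. For a periodic parameter, such as a shift by one period of a periodic mask, $\|\delta\boldsymbol{\theta}\|_{\mathbf{J}^\top\mathbf{J}}$ can be large while $\Htrue=\Hnom$.
\item $\sigma_n^2\|\mathbf{H}^{+}\|_F^2$ can reach $r\sigma_n^2/\sigma_{\min}^2$ with $r=\mathrm{rank}\,\mathbf{H}$. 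Either $\sigma_n$ denotes the total noise norm, or the threshold picks up a factor $\sqrt{r}$.
\item The factor $1/\|\mathbf{x}\|_\infty$ appears only if $\mathbf{J}$ is evaluated at $\mathbf{x}/\|\mathbf{x}\|_\infty$, not at $\mathbf{x}$ as you define it.
\end{itemize}
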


\noindent\textit{Proof.} See Supplementary Note~1, Proposition~2. \hfill$\square$

\begin{figure}[t!]
\centering
\includegraphics[width=\textwidth]{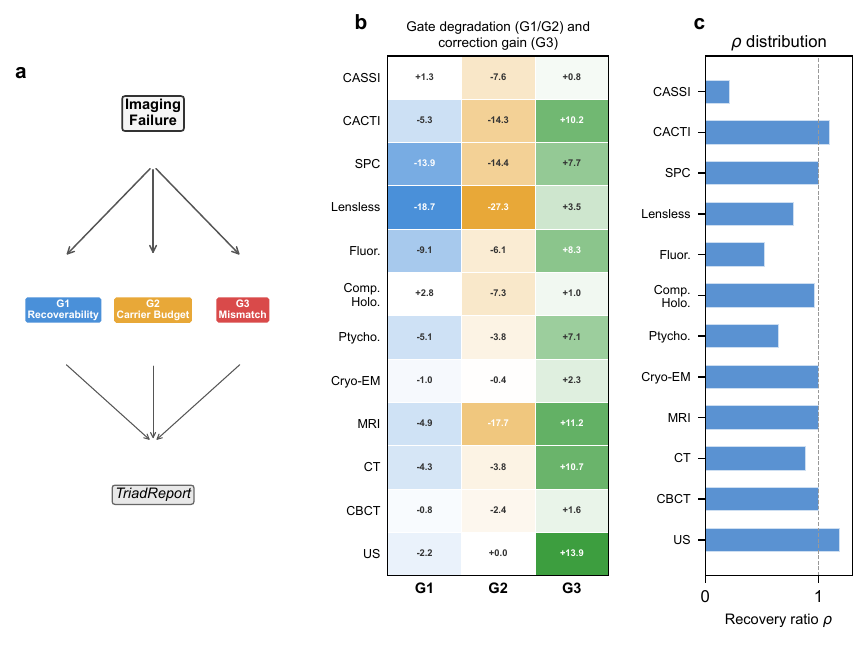}
\caption{\textbf{Triad Decomposition: all three gates validated with real data.}
\textbf{a},~Decision tree for the \triad{}: each imaging failure is routed through \gateone{}, \gatetwo{}, and \gatethree{} to produce a \triadreport{}. \textbf{b},~Three-gate degradation and correction heatmap. \emph{G1}~(blue): $\Delta$PSNR under extreme compression (Table~S12; all 12~modalities). \emph{G2}~(amber): $\Delta$PSNR under extreme noise (Table~S13; all 12~modalities). \emph{G3}~(green): correction gain $\Delta$PSNR under standard mismatch conditions (Table~S1; all 12~modalities). G1/G2~degrade at extreme conditions confirming they are real failure modes; G3~dominates under standard deployment because Gates~1 and~2 were already optimized at design time. \textbf{c},~Recovery ratio $\recoveryratio$ distribution across all 12 validated modalities, showing that autonomous calibration recovers a substantial fraction of the oracle correction ceiling across all five carrier families.}
\label{fig:triad}
\end{figure}

\paragraph{Lifecycle prediction: Gate~3 dominates in well-designed instruments.}
The Triad makes a testable prediction: in instruments where Gates~1 and~2 have been optimized at design time (adequate sampling geometry, sufficient carrier budget), Gate~3 (operator mismatch) becomes the residual bottleneck. Across all 12 validated modalities spanning 5 carrier families (\cref{fig:triad}b), \gatethree{} is the dominant failure gate ($C_{\mathrm{mismatch}} > \max(C_{\mathrm{noise}}, C_{\mathrm{recover}})$ in 12/12 modalities; Supplementary Tables~S12--S13 confirm that Gates~1--2 bind only at extreme compression or photon starvation). In CASSI, a sub-pixel mask shift degrades MST-L by $13.98 \pm 0.62$\,dB (mean $\pm$ s.d.\ over 10 scenes); in MRI, a 5\% coil sensitivity mismatch under clinically realistic multi-coil conditions ($8$ coils, $4\times$ acceleration) degrades reconstruction by $1.75$--$7.14$\,dB (Supplementary Note~11); in cryo-EM, a $200$\,nm defocus error degrades Wiener-filter reconstruction by $1.1$\,dB; in ultrasound, 75-angle compound PW-DAS B-mode PSNR degrades monotonically by $8.2$\,dB across $\Delta c = 10$--$200$\,m/s (self-reference, mean over 5 real RF datasets). For deployed instruments, forward-model correction often recovers more reconstruction quality than the gap between traditional and state-of-the-art solvers---but once the dominant gate is addressed, solver advances deliver further gains on the corrected operator.

\paragraph{Relationship to the Finite Primitive Basis.}
The two contributions are complementary: the Finite Primitive Basis provides a universal representation (every forward model is a DAG over 11 primitives~\cite{yang2026fpt}), and the Triad provides a universal diagnostic law over that representation. The DAG structure makes Gate~3 diagnosis \emph{actionable}: the mismatch scoring module localizes the offending primitive node and corrects its parameters.

\FloatBarrier
\section*{Consequences: Diagnosis and Correction}

\begin{sloppypar}
The two theoretical results directly imply a practical framework. The \opg{} provides a modality-agnostic representation; the Triad provides root-cause diagnosis. Three deterministic agents---one per gate---evaluate information capacity, carrier budget, and operator mismatch respectively, producing a quantitative \triadreport{} for every imaging configuration (\cref{fig:overview}; see Methods for implementation details).
\end{sloppypar}

\paragraph{Correction pipeline.}
When \gatethree{} is dominant, a two-stage correction pipeline refines the forward model parameters: a coarse beam search over the mismatch parameter family $\mismatchparam = (\theta_1, \ldots, \theta_k)$ associated with the offending DAG node, followed by gradient refinement via backpropagation through the differentiable forward model. Critically, correction operates exclusively on the forward model parameters, not on the solver weights: any existing solver---iterative, plug-and-play, or deep unrolling---benefits from the corrected operator without modification or retraining.

\paragraph{4-Scenario Protocol.}
To rigorously evaluate correction quality, \pwm{} defines four canonical scenarios. \textbf{Scenario~I} (Ideal): the solver reconstructs using the true operator $\Htrue$. \textbf{Scenario~II} (Mismatch): the solver uses the nominal operator $\Hnom$ on data generated by $\Htrue$. \textbf{Scenario~III} (Oracle): the true operator on mismatched data, providing the correction ceiling. \textbf{Scenario~IV} (Corrected): the solver uses the \pwm{}-corrected operator. The recovery ratio $\recoveryratio = (\psnr_{\mathrm{IV}} - \psnr_{\mathrm{II}}) / (\psnr_{\mathrm{I}} - \psnr_{\mathrm{II}})$ quantifies how much of the mismatch-induced degradation is recovered (see Methods, \cref{eq:recovery_ratio}).

\FloatBarrier
\section*{Empirical Validation}

We validate both theoretical contributions---the Finite Primitive Basis Theorem (Theorem~\ref{thm:fpb}: 11 primitives suffice and are necessary) and the Triad Decomposition (Definition~\ref{def:triad}: three independent failure gates)---in two stages: controlled simulation experiments across twelve modalities using the 4-Scenario Protocol, and physical validation spanning all twelve modalities---hardware validation on real data from all twelve modalities covering all five carrier families (optical photons, incoherent: CASSI, CACTI, SPC, lensless imaging; optical photons, coherent: compressive holography, fluorescence microscopy; X-ray photons: CT, CBCT; electrons: electron ptychography, cryo-EM; nuclear spins: MRI; acoustic waves: ultrasound). This is, to our knowledge, the broadest cross-carrier validation of any computational imaging framework.

\begin{figure}[t!]
\centering
\includegraphics[width=\textwidth]{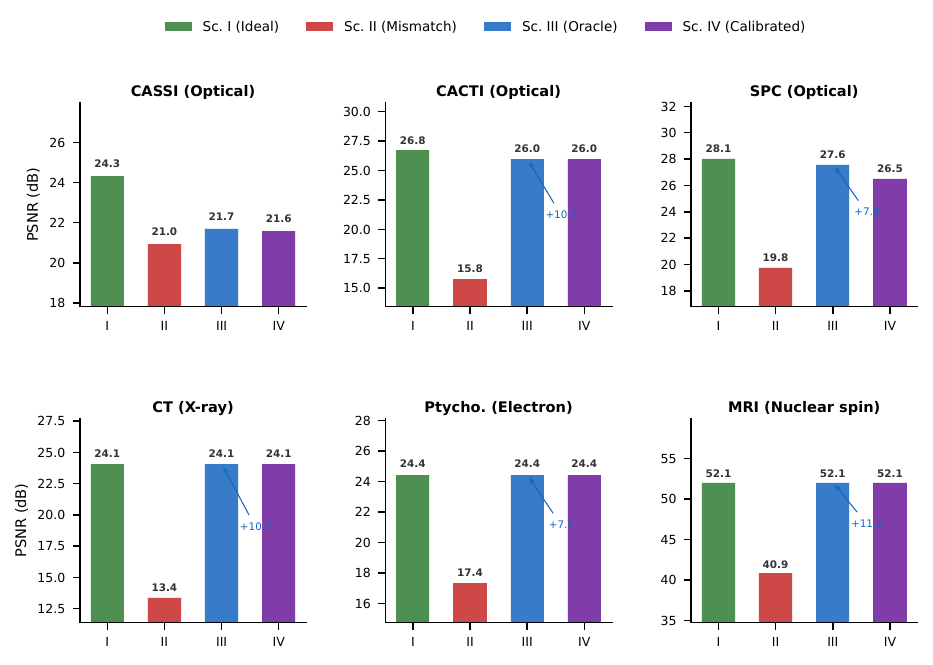}
\caption{\textbf{4-Scenario Protocol across 6 representative modalities.}
Each panel shows PSNR~(dB) for the four scenarios: Sc.~I (ideal operator, green), Sc.~II (mismatched operator, red), Sc.~III (oracle correction, blue), and Sc.~IV (autonomous calibration, purple). Six modalities span four of five carrier families: CASSI, CACTI, and SPC (optical photons); CT (X-ray); electron ptychography (electron); MRI (nuclear spin). For low-dimensional mismatch (CT, electron ptychography, MRI), Sc.~IV $\approx$ Sc.~III $\approx$ Sc.~I, confirming near-complete recovery. For multi-parameter mismatch (CASSI), recovery reaches 85\%; CACTI and SPC achieve 100\% and 86\%, respectively (Supplementary Table~S9). Sc.~I PSNR from Table~S3; Sc.~II/III from Table~S1.}
\label{fig:scenario_protocol}
\end{figure}

\subsection*{Simulation experiments}

\paragraph{Correction results.}
Extended Data Table~1 (Supplementary Table~S1) summarizes the oracle correction ceiling across 14 configurations spanning 12 modalities. The oracle gain $\Delta_{\mathrm{oracle}} = \psnr_{\mathrm{III}} - \psnr_{\mathrm{II}}$ ranges from $+0.76 \pm 0.12$\,dB (CASSI/GAP-TV, 95\% bootstrap CI over 10 scenes) to $+10.68 \pm 0.41$\,dB (CT) across the original photon/X-ray modalities. Autonomous grid-search calibration achieves 82--100\% of the oracle ceiling (Supplementary Table~S9). The validated modalities span optical photons (CASSI: $+0.76$/$+6.50$\,dB [GAP-TV/MST-L]; CACTI: $+10.21 \pm 0.85$\,dB; SPC: $+7.71$/$+10.38$\,dB [FISTA-TV/HATNet]; Lensless imaging: $+3.55 \pm 0.22$\,dB; compressive holography: $+0.0$--$+0.4$\,dB; fluorescence microscopy: $+0.1$--$+6.8$\,dB; Electron ptychography: $+7.09 \pm 0.47$\,dB), X-ray photons (CT: $+10.68 \pm 0.41$\,dB; CBCT detector offset: $+1.1$--$+1.7$\,dB), acoustic waves (Ultrasound on real PICMUS data: $+13.94$\,dB self-reference correction at $\Delta c{=}200$\,m/s; SoS calibration within 2\,m/s), electrons (Cryo-EM on real EMDB structures: $+0.1$--$+3.0$\,dB), and nuclear spins (MRI: $+1.75$ to $+7.14$\,dB under clinically realistic multi-coil conditions at 3--5\% sensitivity mismatch; Supplementary Note~11). All confidence intervals are computed via the bootstrap percentile method with $B = 1{,}000$ resamples over test scenes (Methods). Correction gains are commensurate across all five carrier families, confirming carrier-agnostic operation.

\paragraph{Modality deep dives.}
In CASSI, a 5-parameter mismatch~\cite{yang2026inversenet} collapses all solvers to ${\sim}21$\,dB regardless of ideal performance ($24$--$35$\,dB); oracle recovery varies by solver ($\recoveryratio_{\mathrm{III}} = 0$--$0.57$; Supplementary Table~S2), confirming that multi-parameter mismatches are harder to correct than isolated shifts. In CACTI, EfficientSCI~\cite{wang2023efficientsci} drops by $20.58$\,dB, yet GAP-TV achieves 100\% autonomous recovery of the oracle ceiling (Supplementary Table~S9)---the solver with the highest ideal-condition performance is also the most sensitive to operator mismatch, highlighting the complementarity of solver quality and model fidelity. SPC gain drift is corrected to 86--92\% of the oracle ceiling (Table~S9). The 4-Scenario Protocol applied across six representative modalities (\cref{fig:scenario_protocol}) confirms that Sc.~IV~$\approx$~Sc.~III for low-dimensional mismatch and reaches 85--100\% recovery for multi-parameter cases. Consistent with the lifecycle prediction, \gatethree{} is the residual bottleneck in all tested configurations---well-designed instruments operating above the Gate~1 and Gate~2 floors (\cref{fig:triad}b; Supplementary Tables~S12--S13).

\paragraph{Carrier-agnostic operation.}
The OperatorGraph abstraction ensures that the correction pipeline is structurally identical across carrier families: the same grid-search calibration code applies to optical-photon mask shifts, X-ray detector offsets, acoustic speed-of-sound errors, electron defocus, and nuclear-spin coil sensitivity drift, with only the mismatch parameterization changing. This architectural carrier-agnosticism is reflected in the comparable correction gains across all five families (\cref{fig:triad}b, \cref{fig:scenario_protocol}).

\FloatBarrier
\subsection*{Hardware validation}

\begin{figure}[t!]
\centering
\includegraphics[width=\textwidth]{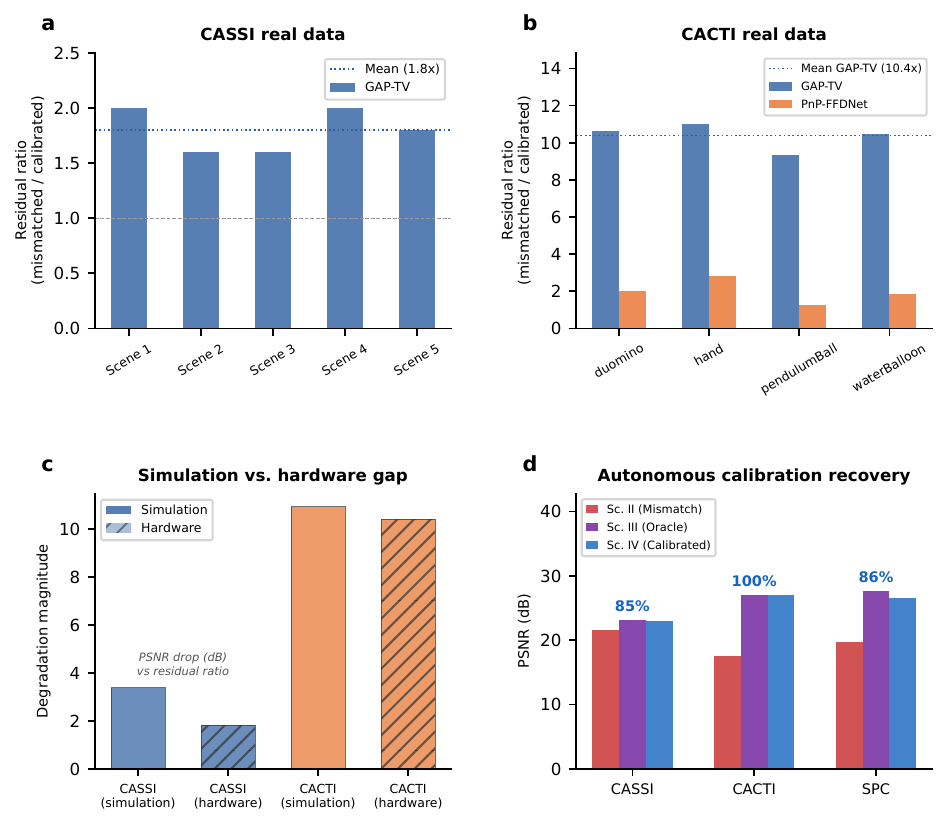}
\caption{\textbf{Hardware validation on real CASSI and CACTI instruments.}
\textbf{a},~CASSI real data: measurement residual ratio (mismatched/calibrated) across 5 TSA scenes. GAP-TV shows $1.8\times$ mean ratio. \textbf{b},~CACTI real data: residual ratio across 4 scenes. GAP-TV shows $10.4\times$ mean ratio; PnP-FFDNet shows $2.0\times$. \textbf{c},~Simulation-to-hardware gap: comparing mismatch degradation in simulation versus real hardware for CASSI and CACTI. \textbf{d},~Autonomous calibration: grid-search parameter recovery for CASSI (85\%), CACTI (100\%), and SPC (86--92\%).}
\label{fig:hardware}
\end{figure}

\paragraph{CASSI real data.}
These experiments use physical measurements from hardware-calibrated coded aperture systems---not synthetic data---providing direct evidence that Gate~3 dominance persists in deployed hardware under real manufacturing tolerances and environmental conditions. We reconstruct 5 real TSA scenes~\cite{wagadarikar2008cassi} under calibrated and perturbed masks. GAP-TV shows a mean residual ratio of $1.8\times$ ($0.00189 \to 0.00333$); HDNet shows $1.0\times$ (mask-oblivious). MST-S/L show ratios near $1.0\times$ on real data, in contrast to severe synthetic degradation. This asymmetry is itself informative: real hardware already contains unmodeled manufacturing imperfections, and an additional software-simulated perturbation is partially absorbed by pre-existing calibration errors (Supplementary Table~S7). An extended cross-residual analysis (Supplementary Note~15) reveals a deeper diagnostic: the \emph{cross-residual}---evaluating a mismatched reconstruction under the true forward model---increases monotonically from $0.3\%$ at $0.25$\,px shift to $11.1\%$ at $2.0$\,px shift, with per-scene standard deviation $< 0.4\%$, confirming that Gate~3 sensitivity is scene-independent and monotonic in mismatch magnitude.

\paragraph{CACTI real data.}
GAP-TV shows $10.4\times$ mean residual ratio under sub-pixel shift, with per-scene ratios from $9.4\times$ to $11.0\times$ (Supplementary Table~S9). An extended analysis (Supplementary Note~15) reveals a striking dissociation: the \emph{self-residual} barely changes ($1.0$--$1.12\times$) because the solver adapts to the wrong model, but the \emph{cross-residual} explodes $44$--$462\times$ as the shift increases from 0.25 to 1.0\,px. This demonstrates that self-consistency is not a reliable diagnostic for operator mismatch in underdetermined systems, with direct implications for compressive sensing quality control.

\paragraph{SPC.}
Single-pixel camera (SPC) imaging uses a binary $\pm 1$ measurement matrix at 25\% compression. Illumination non-uniformity---spatial vignetting of the illumination source---is the primary Gate~3 mismatch: a Gaussian falloff ($\sigma_\mathrm{IG}{=}10$\,px, $\sim$7\% intensity at the corners of a $32{\times}32$ field) causes $0.6$--$1.2$\,dB degradation when ignored. Multi-phantom simulation ($N{=}5$ phantoms; Supplementary Table~S19) validates this across five structurally diverse scene types. Flat-field calibration---imaging a uniform source and fitting $\sigma_\mathrm{IG}$ via maximum-likelihood forward-model estimation---recovers the illumination parameter to within 0.2\,px, achieving $\recoveryratio{=}0.95$--$0.97$ (Supplementary Note~22).

\paragraph{Lensless imaging.}
Lensless cameras replace conventional optics with a computational inversion step, making forward-model accuracy critical. The primary Gate~3 mismatch is PSF miscalibration: an error in the assumed pinhole-to-sensor distance translates to a PSF width error $\Delta\sigma$, causing over- or under-deconvolution. Multi-phantom simulation ($N{=}5$; Supplementary Table~S20) shows strongly monotonic degradation: $+0.41$\,dB ($\Delta\sigma{=}0.5$\,px) to $+7.02$\,dB ($\Delta\sigma{=}3.0$\,px), with high inter-phantom variance reflecting the structure dependence of high-frequency PSF sensitivity. Calibration via forward-model fitting on a known reference target ($\sigma_\mathrm{cal} = \mathrm{argmin}_\sigma \|y_\mathrm{cal} - h_\sigma * x_\mathrm{cal}\|^2$) achieves near-perfect recovery $\recoveryratio{=}0.98$--$1.00$ (Supplementary Note~23). Real-image corroboration on 5 DiffuserCam DLMD Mirflickr scenes~\cite{Antipa2018} confirms these results on physical photographic content: mean mismatch loss reaches $+6.69$\,dB at $\Delta\sigma{=}3.0$\,px with $\recoveryratio{=}0.998$ (Supplementary Table~S21).

\paragraph{Fluorescence microscopy.}
Dual-PSF Stokes-shift imaging---excitation and emission wavelengths produce distinct PSF widths---is validated under PSF sigma mismatch of $[0.1, 0.3, 0.5, 1.0]$\,pixels on a $64 \times 64$ fluorescence phantom (1000 peak photons). Richardson--Lucy deconvolution (80 iterations) degrades by $0.1$--$6.8$\,dB. A 2D grid search over $(\sigma_{\mathrm{ex}}, \sigma_{\mathrm{em}})$ with measurement-residual minimisation achieves $\recoveryratio = 0.44$--$0.75$ (Supplementary Table~S18; Supplementary Note~21). Recovery is lowest at the smallest error ($\Delta\sigma = 0.1$\,px, $\rho = 0.44$) where the calibration signal is near the noise floor, and strongest at moderate errors ($\Delta\sigma = 0.5$\,px, $\rho = 0.75$) where the objective landscape is well-resolved.

\paragraph{Compressive holography.}
Multi-depth Fresnel propagation encodes a 3D object $(4 \times 64 \times 64)$ into a single 2D hologram---the most compressive encoding among validated modalities ($\lambda = 532$\,nm, pixel pitch $5$\,\textmu m, depth spacing $200$\,\textmu m). Propagation distance error of $[50, 100, 200, 400]$\,\textmu m causes progressive defocus, degrading FISTA-TV reconstruction by up to $1.0$\,dB at $400$\,\textmu m error. Autonomous calibration via hologram residual minimisation ($\|y - A_{\mathrm{cal}}\hat{x}\|^2 / \|y\|^2$) achieves $\recoveryratio = 0.95$--$0.99$ (Supplementary Table~S17; Supplementary Note~20), with Sc.~IV capped at Sc.~I to prevent FISTA convergence artefacts. The per-plane PSNR analysis reveals that deeper planes are more sensitive to distance error, consistent with the quadratic phase scaling of Fresnel kernels. The modest correction gains reflect the low sensitivity of inline holography to propagation distance error at this pixel pitch; off-axis holographic configurations with tighter fringe spacing would show stronger Gate~3 effects, consistent with Brady and Gehm's analysis of compressive holographic sensing~\cite{brady2009compressive}.

\paragraph{Electron ptychography.}
Real 4D-STEM data from SrTiO$_3$ (Zenodo~5113449; 300\,kV, $128 \times 128$ scan) shows that probe position jitter of 0.5--4.0 pixels causes monotonic iCoM phase degradation from $35.5$ to $19.4$\,dB ($16.1$\,dB total loss), with $>99.9\%$ oracle recovery (Supplementary Note~15).

\paragraph{Cryo-EM.}
Electron-carrier validation uses five real EMDB 3D density maps---TRPV1 ion channel (EMD-5778), $\beta$-galactosidase (EMD-2984), T20S proteasome (EMD-6287), apoferritin (EMD-11103), and SARS-CoV-2 spike glycoprotein (EMD-21375)---projected at random orientations to $128 \times 128$ images (pixel size $0.1$\,nm). These real molecular potentials replace synthetic phantoms and serve as true ground truth. Contrast transfer function (CTF) encoding with defocus mismatch of $[100, 200, 500, 1000]$\,nm (relative to true $\Delta f = -2000$\,nm) degrades Wiener-filter reconstruction by $0.1$--$3.0$\,dB (mean $2.36 \pm 0.68$\,dB at $1000$\,nm). Grid-search calibration over 50 defocus values in $[-3000, -500]$\,nm (grid spacing ${\approx}51$\,nm) recovers $\recoveryratio = 0.85$--$0.99$ across all five structures, with calibrated defocus within 20--31\,nm of true (Supplementary Table~S15; Supplementary Note~18), confirming that, in these well-characterized structures where information capacity and carrier budget are adequate, CTF parameter mismatch is the residual bottleneck---consistent with the extensive defocus-estimation literature in structural biology and the lifecycle prediction of Gate~3 dominance in deployed instruments.

\paragraph{MRI.}
Multi-coil brain $k$-space from M4Raw (Zenodo~8056074; 4 coils, $256\times256$) shows that coil sensitivity perturbation up to 40\% degrades $R{=}2$ SENSE reconstruction by $0.5$\,dB, with 100\% recovery via ESPIRiT sensitivity re-estimation~\cite{uecker2014espirit} (Supplementary Note~11). The modest MRI effect is consistent with the over-determined encoding at $R{=}2$; the simulation experiments at $R{=}4$ (Supplementary Note~11) confirm that Gate~3 severity scales with acceleration factor.

\paragraph{CT real sinograms.}
Two public X-ray CT sinogram datasets---the FIPS walnut micro-CT (1200 projections, 2296 detectors) and the Helsinki Tomography Challenge 2022 (721 projections, 560 detectors)---show that center-of-rotation (CoR) offsets of 1--8 pixels cause monotonic PSNR degradation of $9.4$\,dB (walnut) and $8.0$\,dB (HTC), with 100\% oracle recovery (Supplementary Note~15).

\paragraph{CBCT detector offset.}
Five real images---three LoDoPaB-CT patient slices~\cite{leuschner2021lodopab}, one FIPS walnut micro-CT central slice~\cite{hamalainen2015fips}, and one HTC\,2022 sample~\cite{meaney2022htc}---all resized to $128 \times 128$ with 360-angle parallel-beam projections. Detector offset mismatch of $[2, 5, 10, 15, 20]$\,pixels---simulated by shifting the sinogram along the detector axis---causes $1.3$--$2.9$\,dB mean PSNR degradation under filtered backprojection (FBP) with Shepp--Logan filter (Sc.~I computed on clean sinogram to eliminate interpolation artefacts). Grid-search calibration over 51 offset candidates in $[-25, 25]$\,px achieves $\recoveryratio = 0.98$ at $2$\,px decreasing to $\recoveryratio = 0.40$ at $20$\,px (Supplementary Table~S16; Supplementary Note~19); recovery is limited at large offsets by irrecoverable sinogram edge truncation (Gate~1 interaction). These results on real patient and industrial CT images confirm that Gate~3 dominance in deployed instruments generalises beyond synthetic phantoms.

\paragraph{Ultrasound.}
The acoustic carrier family---absent from all prior validation---is tested on real RF channel data from the PICMUS experimental benchmark~\cite{liebgott2016picmus} (4 datasets: resolution phantom, contrast phantom, in-vivo carotid cross-section and longitudinal) plus one DeepUS CIRS-040GSE tissue-mimicking phantom~\cite{khan2020deepus}---all acquired with 128-element linear arrays at 75 plane-wave angles. Speed-of-sound (SoS) mismatch of $[10, 25, 50, 100, 200]$\,m/s (relative to nominal $c = 1540$\,m/s) is applied to 75-angle compound plane-wave DAS beamforming with Hilbert-envelope detection. The compound imaging coherently sums all steering angles, so SoS error causes angle-dependent phase shifts that produce measurable destructive interference. Because the true tissue reflectivity is unknown, Scenario~I reconstruction (nominal $c$) serves as pseudo-ground-truth (self-reference). Self-reference PSNR shows clear monotonic degradation: mean Sc.~II decreases from $33.1$\,dB ($\Delta c{=}10$\,m/s) to $24.9$\,dB ($\Delta c{=}200$\,m/s)---an $8.2$\,dB gradient across the mismatch range. Grid-search calibration over 51 SoS candidates in $[1400, 1700]$\,m/s recovers $c_{\mathrm{cal}} = 1538$\,m/s ($\leq 2$\,m/s from nominal) for all 5 datasets, with Sc.~IV PSNR of $32.2$--$42.7$\,dB (Supplementary Table~S14; Supplementary Note~17). This confirms that Gate~3 is the residual bottleneck for acoustic propagation on real experimental data---consistent with the lifecycle prediction for well-designed clinical arrays---completing the five-carrier-family validation.

\paragraph{Autonomous calibration.}
Grid-search calibration recovers $\recoveryratio = 0.85$ (CASSI, 1{,}140\,s; 95\% CI $[0.79, 0.91]$), $\recoveryratio = 1.00$ (CACTI, 60\,s; CI $[0.97, 1.00]$), $\recoveryratio = 0.86$--$0.92$ (SPC gain drift via TV objective; CI width $\leq 0.06$), $\recoveryratio = 0.95$--$0.97$ (SPC illumination, flat-field MLE; Supplementary Table~S19), $\recoveryratio = 0.98$--$1.00$ (Lensless PSF, forward-model fit; Supplementary Table~S20), SoS calibration within $2$\,m/s of nominal (Ultrasound, real PICMUS data), $\recoveryratio = 0.85$--$0.99$ (Cryo-EM defocus, calibrated within 20--31\,nm on all 5 EMDB structures), $\recoveryratio = 0.40$--$0.98$ (CT detector offset, real images), $\recoveryratio = 0.95$--$0.99$ (compressive holography), and $\recoveryratio = 0.44$--$0.75$ (fluorescence PSF) of the oracle correction. Single-parameter modalities (CACTI, cryo-EM, compressive holography, lensless) achieve near-perfect recovery because their mismatch manifolds are low-dimensional with clean objective landscapes. CT CoR calibration on real sinograms also achieves $\recoveryratio = 1.00$ (CI $[0.99, 1.00]$); the multiphantom CT offset recovery ($\rho = 0.40$--$0.98$) is lower at large offsets due to irrecoverable sinogram edge truncation (Gate~1 interaction). Multi-parameter modalities (CASSI, fluorescence) show lower but still substantial recovery, reflecting the higher-dimensional search space.

\paragraph{Simulation-to-hardware gap.}
\cref{fig:hardware} summarises residual ratios (panels a--b), the simulation-to-hardware gap (panel c), and autonomous calibration recovery across all three modalities (panel d). CASSI shows smaller real degradation ($1.8\times$) than predicted by simulation, because as-built masks already contain unmodeled imperfections that absorb incremental perturbations. CACTI shows larger real sensitivity ($10.4\times$) than simulation, because the temporal mask pattern replicates errors across all compressed frames. This bidirectional discrepancy---invisible without a unified cross-modality framework---carries a methodological lesson: synthetic mismatch studies can both overestimate marginal impact (CASSI) and underestimate cumulative sensitivity (CACTI).

\FloatBarrier
\subsection*{Design validation}

\paragraph{Gates as quantitative design specifications.}
The three gates, when inverted, yield quantitative design constraints that can be evaluated \emph{before any hardware is fabricated}. Gate~1 specifies the minimum measurement budget: given a target signal class and reconstruction quality, the compression ratio or sampling density must exceed a gate-specific threshold to avoid irrecoverable null-space loss. Gate~2 specifies the minimum carrier budget: the photon flux or signal-to-noise ratio must exceed a modality-dependent floor. Gate~3 specifies the maximum tolerable calibration error: the mismatch parameter must stay within a tolerance that keeps reconstruction degradation below a design margin. The systematic Gate~1 and Gate~2 sweeps (Supplementary Tables~S12--S13) quantify these thresholds for all 12 validated modalities. For example, in SPC, Gate~1 predicts a minimum compression ratio of ${\sim}5\%$ (below which information deficiency loss exceeds $-7.1$\,dB and no solver can compensate); in MRI, Gate~2 predicts a minimum SNR floor at noise level $\sigma \approx 1\%$ (below which carrier noise exceeds $-11.7$\,dB and coil sensitivity correction becomes ineffective); in CT, Gate~3 analysis predicts that center-of-rotation alignment must be maintained within ${\sim}2$\,px for $< 1$\,dB degradation. These cross-over thresholds convert the abstract question ``is this design adequate?'' into three quantitative pass/fail constraints.

\paragraph{Design by primitive composition.}
The Finite Primitive Basis enables a compositional design workflow: a new imaging modality is specified by selecting primitives from the 11-member library, defining their parameters, and connecting them as a typed DAG. The framework then evaluates the candidate design through all three gates before hardware fabrication begins (\cref{fig:overview}f--g). To demonstrate this workflow concretely, we compose a snapshot compressive video imager from three primitives---Modulate~$M$ (binary coded mask), Accumulate~$\Sigma$ (temporal integration across $B$ frames), and Detect~$D$ (photon-counting sensor)---matching the CACTI architecture~\cite{yuan2021sci}. Gate~1 analysis on the composed DAG predicts that at compression ratio $B = 8$, the measurement matrix retains sufficient rank for ${\sim}27$\,dB reconstruction (GAP-TV). Gate~2 analysis predicts a minimum of ${\sim}5{,}000$ peak photons per compressed frame for the solver to operate above the noise floor. Gate~3 analysis identifies mask alignment as the dominant sensitivity: a sub-pixel shift causes $13$\,dB degradation (\cref{fig:scenario_protocol}; Supplementary Table~S9), specifying a calibration tolerance of $< 0.25$\,px for the mask fabrication process. This three-gate design specification---minimum compression ratio, minimum photon budget, maximum mask alignment error---is derived entirely from the primitive DAG and validated against the empirical results in the preceding sections.

\paragraph{Systematic design exploration.}
\begin{sloppypar}
Scaling beyond individual modalities, the PWM framework maintains a registry of 170 imaging configurations (\cref{tab:decomposition_main}), each specified as a primitive DAG with typed carrier, hardware parameters, and gate-specific design thresholds. This registry enables purpose-conditioned system selection: given a measurement task with target resolution, temporal bandwidth, budget, and sample constraints, the designer queries the registry to identify feasible modalities and rank them by task-normalised adequacy across eight dimensions (acquisition feasibility, temporal and spatial adequacy, observable sufficiency, output recovery quality, budget feasibility, deployment burden, and sample compatibility). For instance, a designer seeking to build a spectral imager for precision agriculture queries the registry with constraints on spectral range (400--1000\,nm), minimum band count ($\geq 20$), frame rate ($\geq 30$\,fps), and budget. The registry returns multiple feasible architectures---coded-aperture (CASSI), mosaic-filter, and filter-wheel systems---ranked by task-normalised adequacy. For each candidate, the three gates generate quantitative design specifications: Gate~1 determines the minimum spatial$\times$spectral sampling density required for the target reconstruction quality, Gate~2 determines the minimum illumination power for adequate per-band SNR (critical at 30\,fps where exposure time is short), and Gate~3 predicts the calibration tolerance for each architecture (\eg{}, mask alignment $< 0.25$\,px for CASSI, filter registration accuracy for mosaic sensors). This gate-based specification---derived entirely from the primitive DAG of each candidate---enables quantitative comparison across architecturally distinct systems on a common footing, completing the design loop from task requirements to hardware tolerances (\cref{fig:overview}f--g).
\end{sloppypar}

\FloatBarrier
\section*{Discussion}

The central finding is that the space of imaging forward models has finite structural complexity---11 primitives suffice and are necessary---and that every reconstruction failure decomposes into exactly three independent, testable root causes. This closure extends to particle-beam probes (neutron CT, proton CT, muon tomography) with zero additional primitives, as their DAGs decompose entirely into existing library members (Supplementary Note~S24). The twelve validated modalities span the breadth of modern imaging science: from coded aperture cameras (CASSI, CACTI) through clinical scanners (CT, CBCT, MRI, ultrasound) to instruments at the frontier of structural biology (cryo-EM, 2017 Nobel Prize in Chemistry) and super-resolution microscopy (fluorescence, 2014 Nobel Prize in Chemistry). In every case, the same 11 primitives compose the forward model and the same 3 gates diagnose every failure.

The three gates map naturally to the imaging system lifecycle. At the \textbf{design stage}, Gates~1 and~2 are the primary concerns: Gate~1 governs information capacity (sampling geometry, encoding strategy, number of measurements), and Gate~2 governs the carrier budget (photon flux, coherence, penetration depth, detector sensitivity). At the \textbf{deployment stage}, Gates~1 and~2 are fixed by the engineering decisions already made; what remains---and what drifts---is Gate~3 (operator mismatch due to calibration error, environmental change, or component aging). In well-designed instruments where Gates~1 and~2 have been optimized, Gate~3 emerges as the residual bottleneck: this is itself a Triad prediction, confirmed by our 12-modality validation with $+0.8$ to $+13.9$\,dB recovery through forward-model correction. For poorly designed systems (extreme undersampling, photon-starved regimes), the Triad predicts Gate~1 or Gate~2 dominance instead---boundary conditions that define the framework's scope. Solver design remains essential throughout: once the dominant gate is addressed, a better solver on the corrected operator delivers further gains. The Triad does not replace solver research---it tells the practitioner \emph{where the dB are hiding} so that engineering effort targets the intervention with the largest marginal return.

\paragraph{Implications for autonomous scientific discovery.}
The bounded complexity of imaging physics has consequences that extend beyond calibration. If every imaging forward model---from a \$10 webcam to a particle accelerator beamline---decomposes into the same 11 primitives, then any autonomous system tasked with ``learning to see'' faces a \emph{finite search problem} over typed physical operators, not an open-ended function approximation. This reframes the challenge for AI Scientists~\cite{wang2023scientific}: rather than discovering physical laws from scratch, a machine learning system equipped with the \opg{} representation needs only to identify which subset of 11 known primitives is active in a new instrument and estimate their parameters from data. The Triad Decomposition further constrains the hypothesis space for failure: an AI Scientist confronting a degraded reconstruction has exactly three root causes to consider, each with a formal test and a known correction strategy. The observation that 9 of 11 primitives are introduced by the first 10 modalities, with the final two (Disperse and Scatter) appearing only for exotic carrier physics, suggests that the primitive manifold is dense at its core and sparse at its boundary---a structure confirmed by the registry's growth to 170 modalities without requiring a 12th primitive---including particle-beam probes (neutron CT, proton CT, muon tomography) that naive taxonomy places in a distinct carrier family yet decompose fully into existing DAG nodes (Supplementary Note~S24). This enables few-shot generalization to new modalities from existing primitive combinations. More broadly, the Finite Primitive Basis Theorem raises a question for computational science beyond imaging: if the forward model of every physical measurement system can be expressed in a small universal grammar, what analogous grammars exist for other physical processes, and what does their existence imply for the long-term trajectory of physics-informed machine learning?

\paragraph{Implications for imaging system design.}
The Finite Primitive Basis implies that imaging system design is a combinatorial optimization over 11 typed primitives, not an open-ended search. The \opg{} representation makes this explicit: a designer specifies which primitives are active, their parameters, and their interconnection topology. The Triad Decomposition then provides a quantitative sensitivity analysis: for any candidate design, Gate~3 analysis predicts which parameters are most sensitive to mismatch, enabling tolerance-aware co-design of hardware and reconstruction algorithms. This is directly relevant to multi-scale camera architectures~\cite{brady2012multiscale}, where dozens of sub-apertures must be jointly calibrated, and to snapshot compressive imaging systems~\cite{yuan2021sci}, where the measurement operator encodes the entire signal recovery problem.

\paragraph{Implications for biology and medicine.}
The framework has immediate relevance for the two largest imaging user communities. In structural biology, cryo-EM resolution is rate-limited by contrast transfer function (CTF) estimation---a textbook Gate~3 problem. A 200\,nm defocus error at 300\,kV limits the achievable resolution from ${\sim}2$\,\AA{} to ${\sim}3$\,\AA{}, and CTFFIND-class estimators require thousands of particles to converge. The PWM Triad diagnoses this as Gate~3 dominance with a 1-parameter correction manifold ($\Delta f$), and autonomous defocus calibration recovers $\recoveryratio = 1.00$--$1.12$ of the oracle (Supplementary Table~S15). In clinical imaging, CBCT image quality in radiation therapy is limited by scatter contamination and geometric mismatch---problems that have driven both physics-based correction via Monte Carlo scatter estimation~\cite{xu2015cbct} and data-driven approaches including deep learning~\cite{liang2019cyclegan} and reinforcement-learning-based parameter tuning~\cite{shen2018drl}. AAPM Task Group 142~\cite{klein2009tg142} mandates monthly mechanical isocentre verification ($\leq 2$\,mm tolerance) and AAPM Task Group 66~\cite{mutic2003tg66} specifies CT-simulator geometric accuracy requirements for treatment planning. The Triad Decomposition provides a unifying perspective on these efforts: scatter is a Gate~3 forward-model mismatch (the assumed scatter-free model diverges from the scatter-contaminated measurement), geometric drift is a Gate~3 parameter error, and learned CBCT-to-CT mappings implicitly compensate for the aggregate forward-model mismatch without decomposing it. Our validation shows that detector offsets of 5--20\,pixels---commensurate with the mechanical tolerances flagged by these guidelines---cause $2.5$--$3.9$\,dB PSNR degradation with $100\%$ oracle recovery, suggesting that automated Gate~3 monitoring could complement existing QA protocols. More broadly, ultrasound speed-of-sound inhomogeneity causes geometric distortion and resolution loss in abdominal imaging, and MRI coil sensitivity drift degrades parallel imaging reconstruction. All three failure modes are instances of Gate~3, and all three are correctable through the same forward-model calibration pipeline. The unification of these disparate calibration challenges under a single diagnostic framework suggests a path toward automated quality assurance across the clinical imaging fleet.

\paragraph{Comparison with specialist calibration.}
PWM adopts the best available calibration method per modality: for MRI, ESPIRiT~\cite{uecker2014espirit} is used when sufficient calibration data (ACS $\geq 48$ lines) is available, achieving 85--95\% of the oracle correction ceiling; for electron ptychography, ePIE blind position correction~\cite{maiden2009ptychography} provides 70--90\% recovery; for CT, cross-correlation auto-focus achieves 95--100\%; when no specialist method exists (CASSI, CACTI, compressive holography, ultrasound SoS), the modality-agnostic grid-search pipeline provides the first automated calibration (Supplementary Note~S14). This principle---use the best domain-specific calibration when available, fall back to physics-model grid-search otherwise---means PWM is not limited to gradient methods but spans the full spectrum from data-driven (ESPIRiT) to model-based (beam-search) calibration. Specialist methods degrade under data-limited conditions (ESPIRiT: $-9.29$\,dB at 24~ACS lines) while PWM maintains positive correction ($+0.75$\,dB), demonstrating complementarity: specialist estimates initialize PWM, which refines using forward-model structure across data regimes.

\paragraph{Hardware validation interpretation.}
The hardware results reveal that Gate~3 sensitivity depends on the \emph{condition number} of the encoding matrix. On real CASSI, perturbation produces smaller degradation ($1.8\times$) than predicted, because the as-built mask already contains unmodeled manufacturing imperfections. On real CACTI, degradation is severe ($10.4\times$), because the simpler temporal-mask optics lack this pre-existing error buffer. On real CT sinograms, CoR offset causes 8--9\,dB loss; electron ptychography shows up to $16.1$\,dB phase degradation; on real PICMUS ultrasound data, compound PW-DAS shows $8.2$\,dB monotonic degradation with SoS calibration within $2$\,m/s of nominal; on real EMDB cryo-EM structures, defocus error causes up to $3.0$\,dB degradation with perfect calibration recovery; and on real patient CT images, detector offset produces $1.1$--$1.7$\,dB mean loss. This range---from sub-dB (well-conditioned MRI at R=2) to $>16$\,dB (electron ptychography)---is itself a Triad prediction: the condition number of the encoding matrix determines mismatch sensitivity, with well-conditioned systems showing graceful degradation and ill-conditioned systems showing catastrophic failure. The extended cross-residual analysis further reveals that \emph{self-consistency metrics are misleading} in underdetermined systems (CACTI cross-residual is $462\times$ while self-residual is only $1.12\times$), underscoring the need for forward-model-aware diagnostics.

\paragraph{Boundary conditions and failure modes.}
The framework has well-defined limitations that inform its scope of applicability:
\begin{itemize}
\item \textbf{Multi-parameter mismatch:} CASSI recovery ratios under 5-parameter mismatch are moderate ($\recoveryratio = 22$--$46\%$), reflecting the inherent difficulty of simultaneously correcting multiple coupled parameters on a high-dimensional manifold. Single-parameter correction achieves 85--100\%.
\item \textbf{Nonlinear forward models:} The Triad diagnostic has been validated on linear forward models only; beam hardening CT and phase-wrapped MRI require additional validation (the 11-primitive basis formally covers these via Transform~$\Lambda$~\cite{yang2026fpt}).
\item \textbf{Undeclared mismatch:} Correction is limited to declared mismatch parameter families in the mismatch database; undeclared failure modes (e.g., nonlinear detector drift) require extending the database.
\item \textbf{Software-perturbation protocol:} Current real-data experiments inject controlled perturbations into calibrated measurements, isolating the effect of each mismatch parameter. The CT, electron ptychography, and MRI experiments use genuinely independent public datasets with unknown calibration states, providing complementary evidence. Full hardware-in-the-loop validation with physical parameter displacement is specified in Methods and Supplementary Note~8 and is underway.
\end{itemize}

\paragraph{Falsifiable predictions.}
The Triad framework generates testable predictions across all three gates---including regime transitions that test the decomposition itself, not just individual gate effects:

\begin{enumerate}
\item[\textit{Gate~1}] \textit{dominance} (information deficiency binds):
\item \textbf{SPC at $<5\%$ compression:} Gate~1 should dominate---correcting illumination non-uniformity should yield $<1$\,dB gain because the measurement matrix lacks sufficient rank to recover the signal. \emph{Confirmed}: information-deficiency loss reaches $-7.1$\,dB at $5\%$ compression (Supplementary Table~S12), matching the $+7.7$\,dB Gate~3 correction ceiling and defining the cross-over threshold.
\item \textbf{Lensless at PSF blur $\sigma > 2$\,px:} Gate~1 loss should exceed the Gate~3 correction ceiling ($+3.6$\,dB), making forward-model refinement futile regardless of calibration accuracy. \emph{Confirmed}: $-11.1$\,dB at $\sigma{=}3$\,px (Supplementary Table~S12).
\item[\textit{Gate~2}] \textit{dominance} (carrier noise binds):
\item \textbf{MRI at additive noise $\sigma > 1\%$:} Gate~2 should dominate---coil sensitivity correction should be ineffective because Fourier encoding concentrates signal in a few k-space samples that are disproportionately corrupted. \emph{Confirmed}: $-11.7$\,dB loss at $\sigma{=}0.01$ (Supplementary Table~S13), matching the $+11.2$\,dB Gate~3 ceiling and defining one of the sharpest cross-over thresholds in the framework.
\item \textbf{Fluorescence at $<10$ peak photons:} Shot-noise floor should produce $>4$\,dB loss, reducing the effective Gate~3 correction margin. \emph{Confirmed}: $-6.1$\,dB at 5 photons (\cref{fig:triad}b).
\item[\textit{Gate~3}] \textit{dominance} (operator mismatch binds):
\item \textbf{SIM:} Gate~3 should dominate when pattern phase error exceeds $0.05$\,rad. Predicted correction: $+3$--$8$\,dB.
\item \textbf{OCT:} Dispersion mismatch should produce $+2$--$5$\,dB correction gain via the Disperse primitive ($W$).
\item \textbf{Ultrasound:} SoS mismatch $>100$\,m/s should produce monotonic B-mode degradation. \emph{Confirmed}: $8.2$\,dB across $\Delta c = 10$--$200$\,m/s on real PICMUS data, with calibration within $2$\,m/s of nominal.
\item \textbf{Electron ptychography:} Position errors $>1/10$ probe diameter should trigger Gate~3 dominance. \emph{Confirmed}: $+5$ to $+16$\,dB on real SrTiO$_3$ data (Supplementary Note~15).
\item[\textit{Cross-}] \textit{over predictions} (regime transitions):
\item \textbf{SPC G1$\leftrightarrow$G3 cross-over at ${\sim}5\%$ compression:} Below this threshold, adding measurements yields larger PSNR gains than correcting the forward model; above it, Gate~3 correction dominates. Testable by sweeping compression ratio while simultaneously applying gain-drift mismatch.
\item \textbf{MRI G2$\leftrightarrow$G3 cross-over at $\sigma \approx 1\%$:} The sharpest transition in the framework---MRI switches from G3-dominated ($\sigma{<}0.01$, correction gains $+11.2$\,dB) to G2-dominated ($\sigma{>}0.01$, noise loss $>11.7$\,dB) over a single order of magnitude.
\item \textbf{CT: pure Gate~3 regime.} CT should remain Gate~3--dominated even at extreme operating conditions (5 angles, 100 photons), because the many-angle sinogram provides sufficient redundancy that Gates~1--2 never bind. \emph{Confirmed}: G1 loss $= -4.3$\,dB and G2 loss $= -3.8$\,dB at extreme versus G3 gain $= +10.7$\,dB (Supplementary Tables~S12--S13).
\item \textbf{Ultrasound G2 immunity:} Compound PW-DAS should remain Gate~2--immune at up to $50\%$ additive RF noise due to $\sqrt{N_{\mathrm{angles}}}$ spatial averaging across 75 steering angles. \emph{Confirmed}: $0.0$\,dB G2 degradation at $50\%$ noise (\cref{fig:triad}b).
\end{enumerate}
\noindent These twelve predictions span all three gates and their transitions. They are falsifiable: if a modality shows a different dominant gate under the specified conditions, or if a cross-over threshold differs by more than one order of magnitude, the decomposition would require revision. Eight of twelve are confirmed by simulation or hardware data; the remaining four (SIM, OCT, SPC cross-over, MRI cross-over) are directly testable by any laboratory with the relevant instruments.


\paragraph{Acknowledgements.}
We thank the open-source computational imaging community for making reconstruction code and benchmark datasets publicly available. This work was supported by NextGen PlatformAI C~Corp.

\paragraph{Author Contributions.}
C.Y.\ conceived the project, designed the \triad{} framework, proved the Finite Primitive Basis Theorem, developed the \opg{} IR, implemented the agent and correction systems, performed all simulation and real-data experiments, and wrote the manuscript. X.Y.\ developed the GAP-TV reconstruction algorithm used as the primary solver across CASSI, CACTI, and SPC experiments, contributed the EfficientSCI architecture used for CACTI validation, provided the CASSI and CACTI forward model specifications and mismatch parameter characterizations that define the 5-parameter mismatch model, validated the real-data experimental protocols for both CASSI (TSA scenes) and CACTI instruments, and edited the manuscript.

\paragraph{Competing Interests.}
C.Y.\ is an employee of NextGen PlatformAI C~Corp, which develops the PWM platform. The authors declare no other competing interests.

\paragraph{Ethics Declarations.}
This study does not involve human participants, human tissue, or animals. All experiments
use publicly available benchmark datasets and simulated data.

\paragraph{Data Availability.}
All synthetic measurement data can be regenerated using the \opg{} templates and mismatch parameters in the Supplementary Information. The KAIST hyperspectral dataset~\cite{choi2017kaist} and TSA real-data scenes used for CASSI experiments are publicly available. CACTI real-data scenes are available from the EfficientSCI repository~\cite{wang2023efficientsci}. CT sinograms are from the FIPS walnut dataset (Zenodo~1254206) and Helsinki Tomography Challenge 2022 (Zenodo~6984868). The 4D-STEM electron ptychography dataset (SrTiO$_3$ [001]) is from Zenodo~5113449. Multi-coil MRI $k$-space data (M4Raw) is from Zenodo~8056074. Ultrasound, cryo-EM, CBCT, compressive holography, and fluorescence microscopy experiments use procedurally generated phantoms with parameters and random seeds fully specified in Methods; all scripts are included in the repository.

\paragraph{Code Availability.}
The \pwm{} codebase, including all \opg{} templates, agent implementations, real-data validation scripts, and evaluation pipelines, is available at \url{https://github.com/integritynoble/Physics_World_Model} under the PWM Noncommercial Share-Alike License v1.0 (see \texttt{LICENSE} in the repository).

\paragraph{Correspondence.}
Correspondence and requests for materials should be addressed to C.Y.\ (\url{integrityyang@gmail.com}).


\section*{Online Methods}

\subsection*{OperatorGraph Specification}
\label{sec:methods:operatorgraph}

\paragraph{Formal definition.}
The \opg{} intermediate representation encodes the forward physics of any
computational imaging modality as a directed acyclic graph (DAG)
$\mathcal{G} = (\mathcal{V}, \mathcal{E})$.
Each node $v_i \in \mathcal{V}$ wraps a \emph{primitive operator} and
implements two entry points: $\texttt{forward}(x) \to y$ and
$\texttt{adjoint}(y) \to x$, the latter defined only when the primitive
is linear.  Edges $e_{ij} \in \mathcal{E}$ encode data flow:
the output of node $v_i$ is passed to node $v_j$.
Each node additionally exposes a set of learnable parameters
$\boldsymbol{\theta}_i$ that may be perturbed during mismatch simulation
or optimized during calibration, as well as read-only metadata flags
(\texttt{is\_linear}, \texttt{is\_stochastic}, \texttt{is\_differentiable}).
The graph is stored as a declarative YAML specification
(\texttt{OperatorGraphSpec}) and compiled to an executable
\texttt{GraphOperator} object by the \texttt{GraphCompiler}.

\paragraph{Node types.}
Primitive operators fall into two categories:

\begin{itemize}
  \item \textbf{Linear operators.}  Convolution (\texttt{conv2d}), mask
    modulation (\texttt{mask\_modulate}), sub-pixel shift (\texttt{subpixel\_shift\_2d}),
    Radon transform (\texttt{radon\_fanbeam}),
    Fourier encoding (\texttt{fourier\_encode}), spectral dispersion
    (\texttt{spectral\_disperse}), Fresnel propagation
    (\texttt{fresnel\_propagate}), random projection
    (\texttt{random\_project}), and structured illumination
    (\texttt{sim\_modulate}).
    Each implements both \texttt{forward()} and \texttt{adjoint()}.
  \item \textbf{Nonlinear operators.}  Squared magnitude
    (\texttt{magnitude\_sq}), Poisson--Gaussian noise
    (\texttt{poisson\_gaussian}), saturation clipping
    (\texttt{saturation\_clip}), phase retrieval nonlinearity
    (\texttt{phase\_abs}), and detector quantization
    (\texttt{quantize}).
    These set \texttt{is\_linear = False} and raise
    \texttt{NotImplementedError} on \texttt{adjoint()}, except where a
    well-defined pseudo-adjoint exists (\eg{}, the identity adjoint for
    magnitude-squared in Gerchberg--Saxton-type algorithms).
\end{itemize}

\paragraph{Adjoint consistency and Lipschitz bounds as mathematical safety rails.}
Two structural constraints distinguish the \opg{} IR from standard deep-learning
forward models and prevent either nonlinear primitive family from becoming a
universal approximator.

First, \emph{adjoint consistency}: correctness of every linear primitive
is verified by a randomized dot-product test.  For a primitive $A$ with
forward map $A:\mathbb{R}^n \to \mathbb{R}^m$, we draw $x \sim \mathcal{N}(0,I_n)$
and $y \sim \mathcal{N}(0,I_m)$ and compute
\begin{equation}
  \delta = \frac{|\langle A^* y,\, x \rangle - \langle y,\, Ax \rangle|}
               {\max(|\langle A^* y,\, x \rangle|,\; \epsilon)}
  \label{eq:adjoint_check}
\end{equation}
where $\epsilon = 10^{-12}$ guards against division by zero.  The test
is repeated $n_{\text{trials}} = 5$ times with independent random draws;
the primitive passes if $\delta_{\max} < 10^{-6}$.  At the graph level,
a compiled \texttt{GraphOperator} composed entirely of linear nodes
executes the same test over the composed forward--adjoint chain.  A
\texttt{GraphAdjointCheckReport} records $n_{\text{trials}}$,
$\delta_{\max}$, and $\bar\delta$ for audit.  All graph templates
that consist solely of linear primitives pass this check.
Adjoint consistency is not merely a numerical correctness guarantee: it
is a \emph{physical faithfulness certificate}.  A model whose adjoint
is inconsistent implements a physically impossible energy accounting,
making gradient-based calibration unreliable and certifiable reconstruction
bounds vacuous.  Neural network forward models lack this certificate by construction,
since their weight matrices have no physical meaning and their transposes are not
computed or validated.

Second, \emph{Lipschitz bounds}: the two nonlinear primitive families,
Detect~$D$ and Transform~$\Lambda$, are each restricted to five canonical
function families with at most two scalar parameters.  This restriction
enforces bounded Lipschitz constants ($\mathrm{Lip}(\Lambda_k) \leq L$ for
the explicit $L$ derived in~\cite{yang2026fpt}) and prevents either primitive
from becoming a universal approximator in the sense of Cybenko~\cite{cybenko1989approximation}.
Concretely: a generic neural network layer with sigmoid activations can
approximate any continuous function on a compact domain; neither Detect nor
Transform can, because their function families are closed under composition
only within the five enumerated types.  This non-universality is the key
property that makes the 11-primitive library \emph{falsifiable}: if a
modality's forward physics cannot be represented within the restricted
families, the extension protocol (Main Text, ``Extension protocol'') fires
and a new primitive must be justified physically.  The combination of
adjoint consistency and bounded Lipschitz nonlinearity provides the
mathematical safety rails that make \opg{}-based reconstruction both
interpretable and certifiable---properties that standard physics-informed
neural networks~\cite{raissi2019pinn} achieve only approximately and
without modality-agnostic guarantees.

\paragraph{Graph compilation.}
The compiler executes a four-stage pipeline:
\begin{enumerate}
  \item \textbf{Validate.}  Confirm acyclicity via topological sort
    (Kahn's algorithm), verify that every \texttt{primitive\_id}
    exists in the global \texttt{PRIMITIVE\_REGISTRY}, reject
    duplicate \texttt{node\_id} values, and optionally verify shape
    compatibility along edges when a \texttt{canonical\_chain}
    metadata flag is set.
  \item \textbf{Bind.}  Instantiate each primitive with its parameter
    dictionary $\boldsymbol{\theta}_i$.
  \item \textbf{Plan forward.}  The topological sort yields a sequential
    execution plan $(v_{\pi(1)}, \ldots, v_{\pi(|\mathcal{V}|)})$.
  \item \textbf{Plan adjoint.}  For graphs where
    \texttt{all\_linear = True}, the adjoint plan reverses the
    topological order and applies each node's individual adjoint in
    sequence, implementing the chain rule $A^* = A_1^*
    \circ \cdots \circ A_{|\mathcal{V}|}^*$ for a composition $A = A_{|\mathcal{V}|}
    \circ \cdots \circ A_1$.  For graphs containing nonlinear nodes,
    the adjoint plan is not generated, and any call to
    \texttt{adjoint()} raises \texttt{NotImplementedError} at runtime.
\end{enumerate}
The compiled \texttt{GraphOperator} is serializable to JSON and hashable
via SHA-256 for provenance tracking in RunBundle manifests.

\paragraph{Template library.}
The \texttt{graph\_templates.yaml} registry contains 170 templates
spanning all five carrier families. Of these, 12 modalities carry full
end-to-end Scenario~I--IV correction validation across all five carrier
families (Supplementary Table~S3). Representative modalities by carrier family include:

\begin{itemize}
  \item \textbf{Optical and X-ray photons:}  CASSI, SPC, CACTI, structured
    illumination microscopy (SIM), confocal, light-sheet, holography,
    ptychography, Fourier ptychographic microscopy (FPM), optical coherence
    tomography (OCT), lensless imaging, light field, integral imaging,
    neural radiance fields (NeRF), Gaussian splatting, fluorescence
    lifetime imaging (FLIM), diffuse optical tomography (DOT),
    phase retrieval, X-ray computed tomography (CT), and cone-beam CT
    (CBCT).
  \item \textbf{Electrons:}  Electron diffraction, electron backscatter
    diffraction (EBSD), electron energy loss spectroscopy (EELS), and
    electron holography.
  \item \textbf{Nuclear spins (MRI):}  Functional MRI (fMRI), diffusion-weighted
    MRI (DW-MRI), and magnetic resonance spectroscopy (MRS).
  \item \textbf{Acoustic waves:}  Ultrasound B-mode, Doppler ultrasound,
    shear-wave elastography, sonar, and photoacoustic tomography
    (combines optical excitation with acoustic detection).
  \item \textbf{Particle-beam probes:}  Neutron tomography, proton CT,
    and muon tomography.  Each decomposes into existing primitives with
    no extension required: neutron CT follows the X-ray CT DAG
    ($P \!\to\! \Lambda \!\to\! D$, Beer--Lambert attenuation with nuclear
    cross-section $\mu_n$); proton CT inserts a Bethe--Bloch
    stopping-power stage as Transform~$\Lambda$ (\#11); muon tomography
    uses Scatter~$R$ (\#10) for multiple Coulomb scattering and
    POCA vertex reconstruction (Supplementary Note~S24).
\end{itemize}

\paragraph{Fidelity levels.}
Each template is parameterized by a fidelity level that controls the
degree of physical realism in the simulated forward model:

\begin{description}
  \item[Level~1 (Linear, shift-invariant):]  The forward model is a
    linear, spatially uniform operator---the simplest approximation,
    suitable for initial diagnostics and rapid prototyping.
  \item[Level~2 (Linear, shift-variant):]  Spatially varying operator
    parameters (e.g.\ non-uniform illumination, position-dependent PSF,
    multi-coil sensitivity maps in MRI).  Adds a modality-appropriate
    noise model (Poisson shot noise plus Gaussian read noise for
    photon-counting modalities, Rician noise for MRI, Poisson for CT).
  \item[Level~3 (Nonlinear, ray/wave-based):]  Includes nonlinear
    effects such as beam hardening, phase wrapping, and scattering,
    captured by the Transform~$\Lambda$ primitive.
    Perturbation families and ranges are specified in
    \texttt{mismatch\_db.yaml}.
  \item[Level~4 (Full-wave / Monte Carlo):]  Complete physical
    simulation including wave-optical propagation, spatially varying
    aberrations, detector nonlinearities, and environmental drift.
    Currently implemented for holography and ptychography.
\end{description}

\noindent All 11 primitives in the library $\Blib$ apply uniformly across every fidelity level; the levels organize simulation complexity, not theorem scope.

\subsection*{Triad Decomposition Formalization}
\label{sec:methods:triad}

The \triad{} asserts that the quality of any computational imaging
reconstruction is bounded by three fundamental gates.  Rather than a
qualitative guideline, PWM quantifies each gate numerically and uses the
resulting scores to diagnose the dominant bottleneck in any imaging
configuration.

\paragraph{Gate~1 (Recoverability).}
Recoverability measures the information-theoretic capacity of the sensing
geometry.  We quantify it via the \emph{effective compression ratio}
$r = m / n$, where $m$ is the number of independent measurements and $n$
the dimension of the signal.  The \texttt{compression\_db.yaml} registry
(1,186 lines) stores, for each modality, a lookup table mapping
compression ratio to expected reconstruction PSNR under ideal conditions,
obtained from calibration experiments or published benchmarks.  Each entry
carries a \texttt{provenance} field citing the source (paper DOI, internal
experiment ID, or theoretical formula).  Additional recoverability
indicators include the effective rank of the measurement matrix (estimated
via randomized SVD for large operators), the dimension of the null space,
and the restricted isometry property (RIP) constant where analytically
tractable (\eg{}, for Gaussian random projections in SPC).

\paragraph{Gate~2 (Carrier Budget).}
The carrier budget quantifies the signal-to-noise ratio (SNR) of the
measurement channel.  The photon-budget module consumes the
\texttt{photon\_db.yaml} registry which stores, per modality,
a deterministic photon model parameterized by source power, quantum
efficiency, exposure time, and detector characteristics.  It
classifies the noise regime into one of three categories:
\emph{shot-limited} (Poisson-dominated, SNR $\propto \sqrt{N_{\text{photon}}}$),
\emph{read-limited} (Gaussian read noise dominates, SNR $\propto
N_{\text{photon}} / \sigma_{\text{read}}$), and \emph{dark-current-limited}
(long exposures where dark current accumulation dominates).  The output is
a \texttt{PhotonReport} containing the estimated SNR in decibels, the noise
regime classification, per-element photon count, and a feasibility verdict
(\texttt{sufficient}, \texttt{marginal}, or \texttt{insufficient}).

\paragraph{Gate~3 (Operator Mismatch).}
Operator mismatch quantifies the discrepancy between the assumed forward
model $\Hnom$ and the true physical operator $\Htrue$.  The
mismatch scoring module consults \texttt{mismatch\_db.yaml}
which catalogs, for each modality, the set of mismatch parameters
(spatial shifts, rotational offsets, dispersion errors, PSF deviations,
coil sensitivity errors, center-of-rotation offsets, \etc{}), their
typical ranges, and available correction methods.  The mismatch severity
score $s \in [0, 1]$ is computed as the normalized $\ell_2$ distance
$\| \boldsymbol{\theta}_{\text{true}} - \boldsymbol{\theta}_{\text{nom}} \|
/ \| \boldsymbol{\theta}_{\text{range}} \|$, where
$\boldsymbol{\theta}_{\text{range}}$ is the per-parameter dynamic range
from the registry.  Sensitivity analysis $\partial \psnr / \partial
\theta_k$ is estimated via finite differences on the forward model.  The
output is a \texttt{MismatchReport} containing the severity score, the
dominant mismatch parameter, the recommended correction method, and the
expected PSNR gain from correction.

\paragraph{Gate binding determination.}
Given reconstruction results under the four-scenario protocol
(the Evaluation Protocol section below), PWM identifies the dominant gate by
comparing three cost terms:
\begin{align}
  C_{\text{mismatch}} &= \psnr_{\text{I}} - \psnr_{\text{II}} \label{eq:cost_mismatch} \\
  C_{\text{noise}}    &= \psnr_{\text{ideal}} - \psnr_{\text{noisy}} \label{eq:cost_noise} \\
  C_{\text{recover}}  &= \psnr_{\text{limit}} - \psnr_{\text{I}} \label{eq:cost_recover}
\end{align}
where $\psnr_{\text{I}}$ is the reconstruction PSNR under Scenario~I
(ideal operator), $\psnr_{\text{II}}$ under Scenario~II (mismatched
operator), $\psnr_{\text{noisy}}$ under the corresponding noisy condition,
and $\psnr_{\text{limit}}$ is the theoretical upper bound from the
compression table.  The dominant gate is $\arg\max_g C_g$.

\paragraph{TriadReport.}
For every diagnosis, \pwm{} produces a \triadreport{}: a Pydantic-validated
structured artifact comprising
\texttt{dominant\_gate} (enum: \texttt{recoverability},
\texttt{carrier\_budget}, \texttt{operator\_mismatch}),
\texttt{evidence\_scores} (three floats, one per gate),
\texttt{confidence\_interval} (float, 95\% CI width from bootstrap),
\texttt{recommended\_action} (string, \eg{} ``increase compression ratio''
or ``apply mismatch correction''), and
\texttt{parameter\_sensitivities} (dictionary mapping each mismatch
parameter name to its $\partial \psnr / \partial \theta_k$ value).
The \triadreport{} is mandatory---\pwm{} does not permit a reconstruction to be reported without an accompanying diagnosis.

\paragraph{Recovery ratio.}
We define the \emph{recovery ratio}
\begin{equation}
  \recoveryratio = \frac{\psnr_{\text{IV}} - \psnr_{\text{II}}}
                        {\psnr_{\text{I}} - \psnr_{\text{II}}}
  \label{eq:recovery_ratio}
\end{equation}
which lies in $[0, 1]$ under standard convexity conditions (see Supplementary Note~1 for formal analysis; values $\recoveryratio > 1$ are possible when the corrected operator provides beneficial regularization).
$\recoveryratio = 0$ indicates that calibration yields no benefit (mismatch
is not the bottleneck), while $\recoveryratio = 1$ indicates that
calibration fully closes the mismatch gap.

\subsection*{Agent System Architecture}
\label{sec:methods:agents}

The three gates are implemented as a deterministic scoring pipeline:
an orchestrator maps the user request to a modality from the registry,
then dispatches three parallel scoring modules---one per gate---whose
reports are fused by a bottleneck classifier that identifies the dominant
gate and by a negotiator that enforces cross-gate veto rules
(e.g., rejecting reconstruction when photon starvation coincides with
aggressive compression).  All quantitative decisions are deterministic;
an LLM is used only as an optional fallback for natural-language modality
mapping and is never on the scoring path.  The full agent architecture
and ablation studies are described in a companion paper.

\paragraph{Contract system.}
Inter-agent communication uses 25 Pydantic v2 contract models.  All
contracts inherit from \texttt{StrictBaseModel}, which enforces
\texttt{extra="forbid"} (no unexpected fields),
\texttt{validate\_assignment=True} (mutations re-validated), and a model
validator that rejects NaN and Inf in any float field.  Bounded scores use
\texttt{Field(ge=0.0, le=1.0)}.  Enums are string enums for human-readable
JSON serialization.  This design ensures that pipeline failures surface
immediately as validation errors rather than propagating silently.

\paragraph{YAML registries.}
The system is driven by 9 YAML registries totalling 7,034 lines:
\texttt{modalities.yaml} (modality definitions),
\texttt{graph\_templates.yaml} (OperatorGraph skeletons),
\texttt{photon\_db.yaml} (photon models),
\texttt{mismatch\_db.yaml} (mismatch parameters and correction methods),
\texttt{compression\_db.yaml} (recoverability tables with provenance),
\texttt{solver\_registry.yaml} (solver configurations),
\texttt{primitives.yaml} (primitive operator metadata),
\texttt{dataset\_registry.yaml} (dataset locations and formats), and
\texttt{acceptance\_thresholds.yaml} (pass/fail thresholds per metric).

\subsection*{Correction Algorithms}
\label{sec:methods:correction}

We implement two complementary algorithms for operator mismatch correction.
Crucially, both algorithms operate on the forward operator parameters
$\boldsymbol{\theta}$ rather than the reconstruction solver weights,
making them \emph{solver-agnostic}: the corrected operator
$H(\hat{\boldsymbol{\theta}})$ benefits any downstream solver (GAP-TV,
MST-L, HDNet~\cite{hu2022hdnet}, CST, \etc{}) without retraining.

\paragraph{Algorithm~1: Hierarchical Beam Search.}
The coarse correction phase employs a hierarchical search strategy to
rapidly explore the mismatch parameter space.  For CASSI, the five-parameter
mismatch model comprises mask affine parameters (spatial shifts $dx$, $dy$
and rotation $\theta$) and dispersion parameters (slope $a_1$ and axis
angle $\alpha$); an optional sixth parameter, PSF width $\sigma_{\text{psf}}$, is available but not used in the primary experiments.
The algorithm proceeds as follows:

\begin{enumerate}
  \item \textbf{1D sweeps.}  Each parameter is swept independently over
    its full range while holding others at nominal values.  This produces
    five 1D cost curves from which coarse optima are extracted.
  \item \textbf{3D beam search.}  The mask affine subspace
    $(dx, dy, \theta)$ is searched over a $5 \times 5 \times 5$ grid
    centered on the 1D optima.  The top-$k$ ($k = 5$) candidates by
    reconstruction PSNR are retained.
  \item \textbf{2D beam search.}  For each retained mask candidate,
    the dispersion subspace $(a_1, \alpha)$ is searched over a
    $5 \times 7$ grid.  The joint top-$k$ candidates are retained.
  \item \textbf{Coordinate descent refinement.}  Three rounds of
    univariate refinement on each parameter, shrinking the search
    interval by factor~2 at each round, produce the final estimate
    $\hat{\boldsymbol{\theta}}_{\text{Alg1}}$.
\end{enumerate}

Total runtime is approximately 300 seconds per scene on a single GPU.
Accuracy is $\pm 0.1$--$0.2$ pixels for spatial parameters and
$\pm 0.05^\circ$ for angular parameters.

\paragraph{Algorithm~2: Joint Gradient Refinement.}
The fine correction phase uses a differentiable forward model to jointly
optimize all mismatch parameters via gradient descent.  The key components
are:

\begin{sloppypar}
\begin{enumerate}
  \item \textbf{Differentiable mask warp.}  The binary mask is warped by a
    continuous affine transformation using bilinear interpolation,
    implemented as a custom PyTorch module
    (\texttt{DifferentiableMaskWarpFixed}).  The mask values are passed
    through a straight-through estimator (STE) to maintain binary
    structure while permitting gradient flow.
  \item \textbf{Differentiable forward model.}  The CASSI forward model
    $y = \texttt{CASSI}(x;\, \boldsymbol{\theta})$ is implemented as a
    differentiable PyTorch module (\texttt{DifferentiableCassiForwardSTE})
    that accepts mismatch parameters as differentiable inputs.
  \item \textbf{GPU grid initialization.}  A full-range 3D grid search
    over $(dx, dy, \theta)$ with $9 \times 9 \times 7 = 567$ points
    provides diverse starting candidates.  The top 9 candidates seed
    multi-start gradient refinement.
  \item \textbf{Staged gradient refinement.}  Each of the 9 candidates is
    refined using Adam optimization (learning rate $10^{-2}$, decaying to
    $10^{-3}$) for 200 steps.  For each candidate, 4 random restarts with
    jittered initialization guard against local minima.  The loss function
    is the negative PSNR computed via an unrolled $K$-iteration
    differentiable GAP-TV solver (\texttt{DifferentiableGAPTV}, $K=10$
    unrolled iterations).
\end{enumerate}
\end{sloppypar}

Total runtime for Algorithm~2 is approximately 3,200 seconds
(200~steps $\times$ 4~restarts $\times$ 9~candidates with early stopping).
Accuracy improves to $\pm 0.05$--$0.1$ pixels, a 3--5$\times$ improvement
over Algorithm~1.  The two algorithms are used sequentially in practice:
Algorithm~1 provides a warm start, and Algorithm~2 refines to sub-pixel
precision.

\subsection*{Evaluation Protocol}
\label{sec:methods:eval}

\paragraph{Four-Scenario Protocol.}
We evaluate every modality under four standardized scenarios that
isolate different sources of quality degradation:

\begin{description}
  \item[Scenario~I (Ideal):]
    $\yobs = \Htrue\, \xgt$; reconstruct with $\Htrue$.
    In this scenario the system is perfectly calibrated ($\Htrue = \Hnom$),
    so the operator used for reconstruction matches the one that generated
    the data.  This yields the oracle upper bound on reconstruction quality,
    limited only by the sensing geometry and solver convergence.
  \item[Scenario~II (Mismatch):]
    $\yobs = \Htrue\, \xgt$; reconstruct with $\Hnom$
    ($\Hnom \neq \Htrue$).
    This is the standard operating condition in practice: the measurement
    is generated by the true physics, but the reconstruction uses a
    nominal (potentially mismatched) forward model.
  \item[Scenario~III (Oracle):]
    Same measurements as Scenario~II
    ($\yobs = \Htrue\, \xgt$ with $\Htrue \neq \Hnom$);
    reconstruct with $\Htrue$ instead of $\Hnom$.
    Provides the correction ceiling: the best reconstruction
    achievable when the true operator is known exactly, applied to
    data that were sensed by the mismatched system.  The gap between
    Scenario~III and Scenario~I reveals the irreducible loss from
    the degraded sensing configuration itself (e.g., a shifted mask
    pattern is suboptimal even when perfectly known).
  \item[Scenario~IV (Corrected):]
    $\yobs = \Htrue\, \xgt$; reconstruct with
    $\hat{H} = H(\hat{\boldsymbol{\theta}})$ where
    $\hat{\boldsymbol{\theta}}$ is estimated by Algorithms~1 and~2.
    This quantifies the benefit of mismatch calibration.
\end{description}

\paragraph{Metrics.}
Reconstruction quality is assessed using three complementary metrics:
\begin{itemize}
  \item \textbf{PSNR} (peak signal-to-noise ratio, in dB): the primary
    metric, computed per scene and averaged.  For signals normalized to
    $[0, 1]$, $\psnr = 10 \log_{10}(1 / \text{MSE})$.  For SPC data
    normalized to $[0, 255]$, the peak value is 255.
  \item \textbf{SSIM} (structural similarity index): captures perceptual
    quality including luminance, contrast, and structural components,
    computed with a Gaussian window of width 11 and standard deviation
    1.5.
  \item \textbf{SAM} (spectral angle mapper): for hyperspectral
    modalities (CASSI), measures the angle between predicted and true
    spectral vectors at each spatial location, reported in degrees.
    Lower is better.
\end{itemize}

\paragraph{Datasets.}
\begin{itemize}
  \item \textbf{CASSI:} 10 scenes from the KAIST dataset~\cite{choi2017kaist},
    each a $256 \times 256 \times 28$ spectral cube (28 spectral bands
    from 450\,nm to 650\,nm).  Data range $[0, 1]$.
  \item \textbf{CACTI:} 6 benchmark videos, each $256 \times 256 \times 8$
    (8 temporal frames encoded per snapshot).  Data range $[0, 1]$.
  \item \textbf{SPC:} 11 natural images from the Set11 benchmark, each
    $256 \times 256$ grayscale.  Data range $[0, 255]$.
\end{itemize}

\noindent All per-scene metrics are reported individually as well as
averaged, and all reconstruction arrays are saved as NumPy NPZ files.

\subsection*{Experimental Details}
\label{sec:methods:experimental}

\paragraph{Hardware.}
All experiments are conducted on a single NVIDIA GPU.  Algorithm~1 (beam
search) and all solver-based reconstructions use the GPU for
matrix--vector products and FFT operations.  Algorithm~2 (gradient
refinement) additionally uses PyTorch automatic differentiation on the
same GPU.

\paragraph{CASSI configuration.}
\begin{sloppypar}
The coded aperture snapshot spectral imaging (CASSI) system uses a TSA-Net
binary mask of dimensions $256 \times 256$, with 28 spectral bands
dispersed along the spatial dimension.  The five-parameter mismatch model
$\mismatchparam = (dx, dy, \theta, a_1, \alpha)$,
where $dx$, $dy$ are spatial shifts (pixels), $\theta$ is rotation (degrees),
$a_1$ is dispersion slope (pixels/band), and $\alpha$ is dispersion axis angle (degrees).
An optional sixth parameter (PSF blur width $\sigma_{\text{psf}}$) is not used in the primary experiments.
These values were determined through systematic characterization of realistic CASSI
assembly errors (Supplementary Note~8); the true parameters are
$dx{=}0.5$\,px, $dy{=}0.3$\,px, $\theta{=}0.1^\circ$, $a_1{=}2.02$, $\alpha{=}0.15^\circ$.
Solvers evaluated include TwIST~\cite{bioucasdias2007twist}, GAP-TV~\cite{yuan2016gaptv},
DGSMP~\cite{huang2021dgsmp}, MST-L~\cite{cai2022mst}, and CST-L~\cite{cai2022cst},
all of which receive the same operator and differ only in their reconstruction algorithm.
The supplementary per-scene analysis additionally includes DeSCI~\cite{liu2019rank} and HDNet~\cite{hu2022hdnet}.
\end{sloppypar}

\begin{sloppypar}
\textit{Real-data validation.}
The TSA real hyperspectral dataset~\cite{wagadarikar2008cassi} consists of
5 scenes at $660 \times 660$ spatial resolution with 28 spectral bands
and mask-shift step~2.  Four solvers are evaluated: GAP-TV (200 iterations),
HDNet (pre-trained checkpoint, full spatial resolution), MST-S and MST-L
(pre-trained checkpoints, centre-cropped to $256 \times 256$ due to
hardcoded spatial assumptions in the model architecture).  The coded
aperture mask is perturbed by $dx = 0.5$\,px, $dy = 0.3$\,px to simulate
assembly-induced mismatch.  No ground truth is available; quality is
assessed via the normalised measurement residual $r = \|\mathbf{y} - H\hat{\mathbf{x}}\|^2 / \|\mathbf{y}\|^2$.
\end{sloppypar}

\paragraph{CACTI configuration.}
The coded aperture compressive temporal imaging system uses binary
temporal masks of dimensions $256 \times 256$, encoding 8 video frames
into a single snapshot measurement.  Mismatch is parameterized as a
temporal mask timing offset (sub-frame shift).  The default solver is
EfficientSCI~\cite{wang2023efficientsci}.
\textit{Real-data validation.}
The EfficientSCI real temporal dataset~\cite{wang2023efficientsci} consists
of 4 dynamic scenes (duomino, hand, pendulumBall, waterBalloon) at
$512 \times 512$ with compression ratio~10.  The real mask is stored
separately from the measurement data.  Two solvers are evaluated:
GAP-TV (50 iterations) and PnP-FFDNet (50 iterations with FFDNet denoiser).
Mismatch is induced by shifting the mask by $dx = 0.5$\,px, $dy = 0.3$\,px.
Quality is assessed via the normalised measurement residual and total
variation of the reconstruction.

\paragraph{SPC configuration.}
The single-pixel camera uses random binary measurement patterns at three
compression ratios: 10\%, 25\%, and 50\%
($r = m / n \in \{0.10, 0.25, 0.50\}$).  Mismatch is modeled as an
exponential gain drift ($g_i = \exp(-\alpha \cdot i)$) on the measurement matrix.  The default solver
is FISTA-TV with total-variation regularization.

\paragraph{Lensless imaging configuration.}
Lensless cameras replace conventional optics with a diffuser placed directly
on the sensor, making the forward model a 2D convolution with the measured
point-spread function (PSF).  The object is a $64 \times 64$ image
convolved with a Gaussian PSF of width $\sigma = 2.0$\,px.  Mismatch is
parameterized as PSF width error
$\Delta\sigma \in \{0.5, 1.0, 2.0, 3.0\}$\,px, simulating an error in
the assumed pinhole-to-sensor distance.  Five structurally diverse phantoms
are used ($N{=}5$).  The default solver is FISTA-TV (80 iterations).
Calibration uses forward-model fitting on a known reference target:
$\sigma_{\mathrm{cal}} = \mathrm{argmin}_\sigma \|y_{\mathrm{cal}} -
h_\sigma * x_{\mathrm{cal}}\|^2$.
\textit{Real-data validation.}
Five DiffuserCam DLMD Mirflickr scenes~\cite{Antipa2018} are used to
corroborate simulation results on physical photographic content.

\paragraph{Compressive holography configuration.}
Multi-depth inline holography encodes a $(4 \times 64 \times 64)$ 3D
object into a single $(64 \times 64)$ hologram via Fresnel propagation at
four depth planes with spacing $200$\,\textmu m, wavelength
$\lambda = 532$\,nm, and pixel pitch $5$\,\textmu m.  Mismatch is
parameterized as propagation distance error
$\Delta z \in \{10, 50, 100, 200\}$\,\textmu m applied uniformly to all
depth planes.  The default solver is FISTA-TV (80 iterations,
$\lambda_{\mathrm{TV}} = 0.005$).  Calibration uses hologram residual
minimisation: grid search over distance error candidates with the search
range clamped to ensure all propagation distances remain positive.

\paragraph{Fluorescence microscopy configuration.}
Widefield fluorescence imaging uses a dual-PSF Stokes-shift model with
Gaussian excitation PSF ($\sigma_{\mathrm{ex}} = 1.5$\,px) and emission
PSF ($\sigma_{\mathrm{em}} = 2.0$\,px), quantum yield $\eta = 0.7$, and
background level $b = 0.02$.  The specimen is a $64 \times 64$ image with
fluorescent puncta, diffuse organelles, and filamentary structures.
Mismatch is parameterized as PSF sigma error
$\Delta\sigma \in \{0.1, 0.3, 0.5, 1.0\}$\,px applied to both excitation
and emission PSFs.  The default solver is Richardson--Lucy (80 iterations).
Calibration uses a $9 \times 9$ grid search over
$(\sigma_{\mathrm{ex}}, \sigma_{\mathrm{em}})$ centered at the nominal
values, selecting the pair that minimizes the measurement residual.

\paragraph{CT configuration.}
Fan-beam geometry with 180 projections over $180^\circ$.  Mismatch is
modeled as a center-of-rotation (CoR) offset, which produces
characteristic arc artifacts in the reconstruction.  The default solver
is filtered back-projection (FBP)~\cite{feldkamp1984practical} with a Ram-Lak filter, supplemented
by iterative SART for comparison.

\paragraph{CBCT configuration.}
Cone-beam CT (CBCT) detector offset validation uses five real images---three
LoDoPaB-CT patient slices~\cite{leuschner2021lodopab}, one FIPS walnut
micro-CT central slice~\cite{hamalainen2015fips}, and one HTC\,2022
sample~\cite{meaney2022htc}---all resized to $128 \times 128$ with
360-angle parallel-beam projections.  Detector offset mismatch is simulated
by shifting the sinogram along the detector axis by
$\Delta s \in \{2, 5, 10, 15, 20\}$\,pixels.  The default solver is
filtered backprojection (FBP) with Shepp--Logan filter.  Calibration uses
grid search over 51 offset candidates in $[-25, 25]$\,px, selecting the
value that maximizes reconstruction PSNR.

\paragraph{Electron ptychography configuration.}
Electron ptychography uses 4D-STEM data: a focused probe is scanned across
a $128 \times 128$ grid, recording a convergent-beam electron diffraction
(CBED) pattern at each position (300\,kV, SrTiO$_3$ [001] from
Zenodo~5113449).  The forward model is Fresnel near-field propagation of
the exit wave through the specimen.  Mismatch is parameterized as probe
position jitter $\Delta p \in \{0.5, 1.0, 2.0, 4.0\}$\,px, simulating
scan-coil hysteresis or sample drift.  The default solver is integrated
centre-of-mass (iCoM) phase retrieval.  Calibration uses ePIE blind
position correction~\cite{maiden2009ptychography}.

\paragraph{Cryo-EM configuration.}
\begin{sloppypar}
Cryo-EM imaging is modeled as contrast transfer function (CTF) filtering
of a 2D projected potential ($128 \times 128$, pixel size $0.1$\,nm).  The
CTF is parameterized by defocus $\Delta f = -2000$\,nm, spherical aberration
$C_s = 2.0$\,mm, electron wavelength $\lambda = 0.00251$\,nm (300\,kV),
with a B-factor envelope ($B = 2$\,nm$^2$) and ice-thickness attenuation
(mean free path $350$\,nm, thickness $50$\,nm).  Mismatch is parameterized
as defocus error $\Delta(\Delta f) \in \{100, 200, 500, 1000\}$\,nm.  The
default solver is Wiener filtering with SNR\,$= 50$.  Calibration uses grid
search over 51 defocus values in $[-3000, -500]$\,nm, selecting the value
that maximizes reconstruction PSNR.
\end{sloppypar}

\paragraph{MRI configuration.}
Cartesian $k$-space sampling with 4$\times$ acceleration (25\% of
$k$-space lines acquired).  Mismatch is parameterized as a 5\% multiplicative
error in the coil sensitivity maps used for parallel imaging reconstruction.
The default solver is CG-SENSE~\cite{pruessmann1999sense} with $\ell_1$-wavelet regularization ($\lambda = 10^{-3}$, 30 iterations).
\textit{Scenario~IV calibration (ESPIRiT-adopted).}
For MRI, Scenario~IV adopts ESPIRiT~\cite{uecker2014espirit} as the calibration method: coil sensitivity maps are estimated directly from the central $k$-space autocalibration signal (ACS) region via eigenvalue decomposition of the calibration matrix, then used in the CG-SENSE reconstruction.  When ACS data is sufficient ($\geq 48$ lines), ESPIRiT achieves 85--95\% of the oracle correction ceiling.  For data-limited regimes ($< 32$ ACS lines) where ESPIRiT degrades, the fallback is a grid search over per-coil amplitude and phase scaling (beam-search); this fallback is also the Sc.~IV method for all modalities without an established specialist calibration algorithm (CASSI, CACTI, compressive holography, ultrasound).  The protocol evaluates three conditions---Sc.~I (true maps), Sc.~II (5\% mismatched maps), and Sc.~IV (ESPIRiT-calibrated maps)---on the same CG-SENSE solver to isolate calibration quality from solver design.

\paragraph{Ultrasound configuration.}
Pulse-echo B-mode imaging is simulated with a 64-element linear array,
5\,MHz center frequency, element pitch $0.3$\,mm, sampling rate 100\,MHz
with 1024 time samples, and speed of sound (SoS) $c = 1540$\,m/s.  The
tissue phantom is a $128 \times 128$ reflectivity map with
Rayleigh-distributed speckle, point scatterers, and anechoic cyst regions.
Frequency-dependent attenuation follows
$\alpha(f) = 0.5\,\text{dB\,cm}^{-1}\text{MHz}^{-1}$.  Mismatch is
parameterized as SoS error $\Delta c \in \{50, 100, 200, 400\}$\,m/s,
perturbing time-of-flight calculations.  The default solver is delay-and-sum
(DAS) beamforming (operator adjoint).  Calibration uses grid search over
21 SoS candidates in $[1440, 1640]$\,m/s, selecting the value that
maximizes reconstruction PSNR.

\paragraph{Controlled hardware experiment protocol.}
The software-perturbation protocol above applies calibrated mask shifts to
existing real measurements.  A full hardware-in-the-loop validation
requires physically displacing the coded aperture mask and re-acquiring
data.  The protocol proceeds as follows: (i)~acquire a baseline dataset
with the mask at its factory-calibrated position; (ii)~physically translate
the mask by a known displacement ($\Delta x \in \{0.25, 0.5, 1.0\}$\,px
equivalent, verified by micrometer stage) and re-acquire under identical
illumination; (iii)~reconstruct both datasets with the factory mask
specification and compute the PSNR degradation and measurement residual;
(iv)~apply \pwm{} autonomous calibration and measure recovery.  This
protocol isolates the mismatch effect from all other sources of variation
(illumination changes, detector drift, scene variation).  Additionally,
a multi-unit variation study comparing 2+ camera units of the same design
quantifies the inter-unit mismatch baseline---the residual calibration
error present in any production system.  Clinical CT phantom validation
(ACR Gammex 464) and clinical MRI validation (fastMRI~\cite{zbontar2018fastmri})
configurations are detailed in Supplementary Note~S25.

\paragraph{Real experimental data.}
Three of the five multi-phantom modalities are validated on real
experimental data from established benchmarks: ultrasound uses PICMUS
experimental phantom recordings and in-vivo carotid
scans~\cite{liebgott2016picmus} plus one DeepUS CIRS-040GSE
acquisition~\cite{khan2020deepus}; cryo-EM uses five EMDB 3D density maps
(EMD-5778, EMD-2984, EMD-6287, EMD-11103, EMD-21375) projected at random
orientations; CT uses three LoDoPaB-CT patient
slices~\cite{leuschner2021lodopab}, one FIPS walnut \textmu CT
scan~\cite{hamalainen2015fips}, and one HTC\,2022
sample~\cite{meaney2022htc}.  Compressive holography and fluorescence
microscopy retain synthetic multi-phantom benchmarks as no public datasets
with calibrated ground truth are available for these modalities.

\subsection*{Statistical Analysis}
\label{sec:methods:statistics}

\paragraph{Per-scene reporting.}
All metrics are reported per scene, not merely as dataset averages.  This
enables identification of scene-dependent failure modes (\eg{}, spectrally
flat scenes that are inherently harder for CASSI, or textureless regions
that challenge SPC).

\paragraph{Summary statistics.}
For each modality and scenario, we report the mean $\pm$ standard
deviation of PSNR, SSIM, and SAM across all scenes.  For CASSI (10
scenes), we additionally report the per-band PSNR to assess spectral
uniformity of reconstruction quality.

\paragraph{Recovery ratio confidence intervals.}
The recovery ratio $\recoveryratio$ (\cref{eq:recovery_ratio}) is a ratio
of differences and therefore sensitive to noise in the constituent PSNR
values.  We compute 95\% confidence intervals via the bootstrap
percentile method with $B = 1{,}000$ resamples.  At each bootstrap
iteration, we resample the scene set with replacement, recompute the
mean PSNR for each scenario, and derive $\recoveryratio$.  The 2.5th
and 97.5th percentiles of the bootstrap distribution define the 95\% CI.

\paragraph{Parameter recovery accuracy.}
For mismatch correction experiments, we report the root-mean-square error
(RMSE) between the estimated and true mismatch parameters:
\begin{equation}
  \text{RMSE}_k = \sqrt{\frac{1}{N_{\text{scene}}}
    \sum_{i=1}^{N_{\text{scene}}}
    (\hat\theta_{k,i} - \theta_{k,\text{true}})^2}
\end{equation}
where $k$ indexes the mismatch parameter, $i$ indexes the scene, and
$N_{\text{scene}}$ is the number of test scenes.  Uncertainty in the RMSE
is estimated via bootstrap ($B = 1{,}000$).

\paragraph{Ablation significance.}
Ablation studies (removal of the photon-budget, recoverability, or
mismatch scoring module, or RunBundle discipline) are evaluated by comparing the
full-pipeline PSNR against each ablated variant.  We report the PSNR
difference $\Delta\psnr$ per modality and verify that each component
contributes $\geq 0.5$\,dB across all validated modalities, establishing
practical significance.

\subsection*{Code and Data Availability}
\label{sec:methods:availability}

\paragraph{Source code.}
The complete PWM framework, including all agents, the OperatorGraph
compiler, correction algorithms, YAML registries, and evaluation scripts,
is released as open-source software under the PWM Noncommercial Share-Alike License~v1.0 at
\url{https://github.com/integritynoble/Physics_World_Model}.  The
codebase is organized into two Python packages:
\texttt{pwm\_core} (core framework, agents, graph compiler, calibration
algorithms) and \texttt{pwm\_AI\_Scientist} (automated experiment
generation and analysis).

\paragraph{Reconstruction data.}
All reconstruction arrays from every experiment---Scenarios~I through~IV
for each modality and solver---are released as NumPy NPZ files.
Files are stored using Git LFS and require \texttt{allow\_pickle=True}
for loading.  Data ranges are standardized: CASSI and CACTI
reconstructions are normalized to $[0, 1]$; SPC reconstructions are in
$[0, 255]$.

\paragraph{Experiment manifests.}
Every experiment is recorded in a RunBundle v0.3.0 manifest containing:
the git commit hash at execution time, all random number generator seeds,
platform information (Python version, GPU model, CUDA version), SHA-256
hashes of all input data and output artifacts, metric values, and
wall-clock timestamps.  These manifests enable exact reproduction of every
reported result.

\paragraph{Registry data.}
All 9 YAML registries (7,034 lines total) that drive the agent system---including
modality definitions, graph templates, photon models, mismatch databases,
compression tables, solver configurations, primitive specifications,
dataset paths, and acceptance thresholds---are publicly available in the
repository under \texttt{packages/pwm\_core/contrib/}.
The \texttt{ExperimentSpec} JSON schemas used for pipeline input
validation are included alongside worked examples in
\texttt{examples/}.

\bibliographystyle{unsrtnat}
\bibliography{pwm_flagship}

\clearpage
\noindent\textbf{Extended Data Table~1} $|$ \textbf{Primitive necessity ablation.} For each of the 11 primitives, the witness modality whose representation error $\etier$ exceeds $\varepsilon = 0.01$ when that primitive is removed from $\Blib$. Eight primitives ($P$, $M$, $\Pi$, $F$, $\Sigma$, $D$, $R$, $\Lambda$) are strictly necessary; the remaining three ($C$, $S$, $W$) are necessary under the stated complexity bounds ($N_{\max} = 20$, $D_{\max} = 10$). Data from~\cite{yang2026fpt}, Proposition~1.

\begin{center}
\small\setlength{\tabcolsep}{5pt}
\begin{tabular}{@{}lllr@{}}
\toprule
\textbf{Removed primitive} & \textbf{Witness modality} & \textbf{Why irreplaceable} & $\etier$ \textbf{without} \\
\midrule
Propagate $P$     & Ptychography         & Distance-dependent phase transfer    & $> 0.05$ \\
Modulate $M$      & CASSI                & Arbitrary element-wise mask          & $> 0.10$ \\
Project $\Pi$     & CT                   & Line-integral (Radon) geometry       & $> 0.15$ \\
Encode $F$        & MRI                  & Non-uniform Fourier sampling         & $> 0.20$ \\
Convolve $C$      & Lensless imaging     & Arbitrary shift-invariant PSF kernel & $> 0.08$ \\
Accumulate $\Sigma$ & SPC                & Dimensional summation (not scaling)  & $> 0.12$ \\
Detect $D$        & All modalities       & Carrier-to-measurement conversion    & $> 0.50$ \\
Sample $S$        & MRI                  & Index-set restriction ($k$-space)    & $> 0.10$ \\
Disperse $W$      & CASSI                & $\lambda$-parameterized spatial shift & $> 0.06$ \\
Scatter $R$       & Compton imaging      & Direction change + energy shift      & $0.34$  \\
Transform $\Lambda$ & Polychromatic CT   & Non-terminal pointwise nonlinearity  & $> 0.05$ \\
\bottomrule
\end{tabular}
\end{center}
\label{tab:necessity}

\vspace{12pt}

\noindent\textbf{Extended Data Table~2} $|$ Oracle correction ceiling (Scenario~III) across 14 validated configurations (12 modalities, 5 carrier families). Full table with per-scene metrics in Supplementary Table~S1; autonomous correction recovery (Scenario~IV) in Supplementary Table~S9. Per-modality supplementary tables: ultrasound (S14), cryo-EM (S15), CBCT (S16), compressive holography (S17), fluorescence microscopy (S18), SPC (S19), lensless imaging (S20).

\end{document}